\title{Absolute Expressiveness of Subgraph-based Centrality Measures}
\author{Andreas Pieris}{University of Edinburgh, UK \and University of Cyprus, Cyprus}{apieris@inf.ed.ac.uk}{https://orcid.org/0000-0003-4779-3469}{}
\author{Jorge Salas}{Pontificia Universidad Cat\'olica de Chile, Chile \and University of Edinburgh, UK}{jusalas@uc.cl}{https://orcid.org/0000-0001-5535-3055}{}
\authorrunning{A. Pieris and J. Salas}
\keywords{Graph centrality measures, ranking, expressiveness}
\begin{document}

\maketitle

\renewcommand{\paragraph}[1]{\textbf{#1}}

\newcommand{\mi}[1]{\mathit{#1}}
\newcommand{\ins}[1]{\mathbf{#1}}
\newcommand{\sub}{\text{\rm Sub}}
\newcommand{\dom}{\text{\rm Dom}}
\newcommand{\gr}{\mathbf{G}}
\newcommand{\vg}{\mathbf{VG}}
\newcommand{\vcg}{\mathbf{VCG}}
\newcommand{\congr}{\mathbf{CG}}
\newcommand{\comp}{\text{\rm Comp}}
\newcommand{\Ce}{\mathsf{C}}
\newcommand{\conC}{\mathsf{ConC}}
\newcommand{\con}{\mathsf{Con}}
\newcommand{\F}{\mathsf{F}}
\newcommand{\A}{\mathsf{A}}
\newcommand{\ps}{\mathcal{P}}
\newcommand{\sd}{\textsc{SD}}
\newcommand{\subc}[1]{\mathsf{C}\langle #1 \rangle}
\newcommand{\rank}{\text{\rm Rank}}
\newcommand{\val}[2]{\text{\rm Val}_{#1}^{#2}}
\newcommand{\bval}[2]{\text{\rm BVal}_{#1}^{#2}}
\newcommand{\pc}{\mathit{pc}}

\newcommand{\stress}{\mathsf{Stress}}
\newcommand{\allsub}{\mathsf{All\text{\rm -}Subgraphs}}
\newcommand{\degree}{\mathsf{Degree}}
\newcommand{\clique}{\mathsf{Cross\text{\rm -}Clique}}
\newcommand{\closeness}{\mathsf{Closeness}}  
\newcommand{\decay}{\mathsf{Decay}}
\newcommand{\harmonic}{\mathsf{Harmonic}}
\newcommand{\pagerank}{\mathsf{PageRank}}
\newcommand{\eigenvector}{\mathsf{Eigenvector}}
\newcommand{\betweeness}{\mathsf{Betweenness}}


\def\qed{\hfill{\qedboxempty}      
  \ifdim\lastskip<\medskipamount \removelastskip\penalty55\medskip\fi}

\def\qedboxempty{\vbox{\hrule\hbox{\vrule\kern3pt
                 \vbox{\kern3pt\kern3pt}\kern3pt\vrule}\hrule}}

\def\qedfull{\hfill{\qedboxfull}   
  \ifdim\lastskip<\medskipamount \removelastskip\penalty55\medskip\fi}

\def\qedboxfull{\vrule height 4pt width 4pt depth 0pt}

\newcommand{\markfull}{\qedboxfull}
\newcommand{\markempty}{\qed} 


\newcommand{\jorge}[1]{\todo[inline, color=blue!15]{{\bf Jorge:} #1}}
\newcommand{\andreas}[1]{\todo[inline, color=red!30]{{\bf Andreas:} #1}}


\begin{abstract}
	In graph-based applications, a common task is to pinpoint the most important or ``central'' vertex in a (directed or undirected) graph, or rank the vertices of a graph according to their importance. To this end, a plethora of so-called centrality measures have been proposed in the literature. Such measures assess which vertices in a graph are the most important ones by analyzing the structure of the underlying graph. A family of centrality measures that are suited for graph databases has been recently proposed by relying on the following simple principle: the importance of a vertex in a graph is relative to the number of ``relevant'' connected subgraphs surrounding it; we refer to the members of this family as subgraph-based centrality measures. Although it has been shown that such measures enjoy several favourable properties, their absolute expressiveness remains largely unexplored. The goal of this work is to precisely characterize the absolute expressiveness of the family of subgraph-based centrality measures by considering both directed and undirected graphs. To this end, we characterize when an arbitrary centrality measure is a subgraph-based one, or a subgraph-based measure relative to the induced ranking. These characterizations provide us with technical tools that allow us to determine whether well-established centrality measures are subgraph-based. Such a classification, apart from being interesting in its own right, gives useful insights on the structural similarities and differences among existing centrality measures.
\end{abstract}

\section{Introduction}\label{sec:introduction}
%

Graphs are well-suited for representing complex networks such as biological networks, cognitive and semantic networks, computer networks, and social networks, to name a few.
%
In many applications that involve (directed or undirected) graphs, a crucial task is to pinpoint the most important or ``central'' vertex in a graph, or rank the vertices of a graph according to their importance. Indeed, these graph-theoretic tasks naturally appear in many different contexts, for example, finding people who are more likely to spread a disease in the event of an epidemic~\cite{DeBa02}, highlighting cancer genes in proteomic data~\cite{GaVi10}, assessing the importance of websites by search engines~\cite{page1999pagerank}, identifying influencers in social networks~\cite{HCWR20}, and many more.
%
To this end, a plethora of centrality measures have been proposed that assess the importance of a vertex in a graph~\cite{BoEv06,newman2018networks}. Centrality measures have been also studied in a principled way with the aim of providing axiomatic characterizations via structural properties over certain classes of graphs; see, e.g.,~\cite{Kitti:2016,Brink:Gilles:2000,was2018axiomatization}.


%


It is not surprising that centrality measures have been also considered in the context of graph-structured data.
%
Major graph database management systems such as
Neo4j\footnote{https://neo4j.com/docs/graph-data-science/current/algorithms/centrality/} and TigerGraph\footnote{https://docs.tigergraph.com/graphml/current/centrality-algorithms/}, have already adopted and implemented several centrality measures and algorithms in their Graph Data Science library such as Eigenvector~\cite{bonacich1987power}, PageRank~\cite{page1999pagerank}, Closeness~\cite{Sabi66}, and many others.
Moreover, applications of centrality measures have recently emerged in the context of knowledge graphs for entity linking~\cite{MHL20}, and Semantic Web search engines where ranking results is a central task~\cite{HHUKPD11}.

Several existing centrality measures rely on the following intuitive principle: the importance of a vertex in a graph is relative to the number of {\em connected subgraphs} (e.g., triangles, paths, or cliques) surrounding it. We refer to such measures as {\em subgraph-based}.
%
%
Interestingly, subgraph-based centrality measures are of particular interest for graph-structured data since a connected subgraph can be understood as the potential graph patterns occurring in a graph database.  
Consider, for example, a property graph $G$, which is essentially a finite directed graph, and a language $L$ of basic graph patterns~\cite{AABHRV17}. The evaluation of a query $Q$ from $L$ over $G$, denoted $Q(G)$, is the set of vertices of $G$ that comply with the graph pattern expressed by $Q$.
It is reasonable to assume that the more queries $Q$'s from $L$ such that $v \in Q(G)$ exist, the more important $v$ is in $G$ (relative to $L$).
This way of defining the importance of a vertex follows the general principle discussed above, where the relevant connected subgraphs are the basic graph patterns from the language $L$.
%
%
%

A framework for defining and studying subgraph-based centrality measures has been recently introduced by Riveros and Salas~\cite{RiSa20}, where the importance of a vertex is defined as the logarithm of the number of connected subgraphs surrounding it.
As explicitly discussed in~\cite{RiSa20}, the choice of applying the logarithmic function is purely for technical simplicity, and one could adopt any function, which we call filtering function, that leads to a richer family of subgraph-based centrality measures.
Note that~\cite{RiSa20} considered only undirected graphs, but we can naturally define subgraph-based centrality measures over directed graphs.
The main outcome of the analysis performed in~\cite{RiSa20} is that subgraph-based centrality measures satisfy desirable theoretical properties, typically called axioms, provided that the underlying family of connected subgraphs enjoys certain properties.

Despite the thorough analysis performed in~\cite{RiSa20}, the absolute expressiveness of the family of subgraph-based centrality measures remains largely unexplored.
Our main objective is to delineate the limits of the family of subgraph-based measures for both directed and undirected graphs. More precisely, we would like to understand when an arbitrary centrality measure is a subgraph-based one, or when it induces the same ranking as a subgraph-based one.

\medskip

\noindent \paragraph{Our Contributions.} Our contributions can be summarized as follows:

\begin{itemize}
\item In Section~\ref{sec:characterization}, we provide a precise characterization of when an arbitrary centrality measure is subgraph-based. More precisely, we isolate a ``bounded value'' property $P$ over centrality measures, which essentially states that the total number of distinct values that can be assigned to vertices surrounded by a certain number of connected subgraphs is bounded, and then show that a measure can be expressed as a subgraph-based one iff it enjoys $P$.

\item We then proceed in Section~\ref{sec:characterization-ranking} to characterize when an arbitrary centrality measure induces the same ranking as a subgraph-based measure. In this case, we isolate a ``graph coloring'' property $P$ over centrality measures, and then show that a centrality measure can be expressed as a subgraph-based one relative to the induced ranking iff it enjoys $P$.

\item In Section~\ref{sec:monotonic-functions}, we focus on the family of monotonic subgraph-based measures, i.e., subgraph-based measures with a monotonic filtering function, and provide analogous characterizations via refined properties in the spirit of the ``bounded value'' property discussed above.
An interesting finding is that in the case of connected graphs, {\em every} centrality measure can be expressed as a monotonic subgraph-based measure  relative to the induced ranking.

\item We finally proceed in Section~\ref{sec:classification} to determine if established measures (such as PageRank, Eigenvector, and many others) are (monotonic) subgraph-based (relative to the induced ranking). Such a classification, apart from being interesting in its own right, provides insights on the structural similarities and differences among the considered measures.
\end{itemize}

\smallskip
\noindent\paragraph{Clarification Remark.} In the rest of the paper, due to space constraints and for the sake of clarity, we focus on undirected graphs, but all the notions and results can be transferred to the case of directed graphs under the standard notion of weak connectedness. 
\section{Preliminaries}\label{sec:preliminaries}

We recall the basics on undirected graphs and graph centrality measures. In the rest of the paper, we assume the countable infinite set $\ins{V}$ of {\em vertices}. For $n>0$, let $[n] = \{1,\ldots,n\}$.

\medskip

\noindent
\paragraph{Undirected Graphs.} An {\em undirected graph} (or simply {\em graph}) $G$ is a pair $(V,E)$, where $V$ is a finite non-empty subset of $\ins{V}$ (the set of {\em vertices of $G$}), and $E \subseteq \{\{u,v\} \mid u,v \in V\}$ (the set of {\em edges of $G$}). For notational convenience, given a graph $G$, we write $V(G)$ and $E(G)$ for the set of its vertices and edges, respectively.
We denote by $\gr$ the set of all graphs, and by $\vg$ the set of vertex-graph pairs $\{(v,G) \in \ins{V} \times \gr \mid v \in V(G)\}$.
The {\em neighbourhood} of a vertex $v \in V(G)$ in $G$, denoted $N_G(v)$, is the set $\{u \in V(G) \mid \{u,v\} \in E(G)\}$. For $u \in N_G(v)$, we say that $v$ and $u$ are {\em adjacent} in $G$.
For a vertex $v \in \ins{V}$, we write $G_v$ for the graph $(\{v\},\emptyset)$.

A {\em subgraph} of a graph $G$ is a graph $G'$ such that $V(G') \subseteq V(G)$ and $E(G') \subseteq E(G)$; we write $G' \subseteq G$ to indicate that $G'$ is a subgraph of $G$. Note that the binary relation $\subseteq$ over graphs forms a partial order. We denote by $\sub(G)$ all the subgraphs of $G$, that is, the set of graphs $\{G' \mid G' \subseteq G\}$.
Given a set of vertices $S \subseteq V(G)$, the subgraph of $G$ {\em induced} by $S$, denoted $G[S]$, is the subgraph $G'$ of $G$ such that $V(G') = S$ and $E(G') = \{\{u,v\} \in E(G) \mid u,v \in S\}$.

A {\em path} in $G$ is a sequence of vertices $\pi = v_0,v_1,\ldots,v_n$, for $n \geq 0$, such that $\{v_i,v_{i+1}\} \in E(G)$ for every $ 0 \leq i < n$. We further say that $\pi$ is a path from $v_0$ to $v_n$. The length of $\pi$, denoted $|\pi|$, is the number of edges in $\pi$, i.e., $n$. By convention, there exists a path of length $0$ from a vertex to itself.
The {\em distance} between two vertices $u,v \in V(G)$ in $G$, denoted $d_G(u,v)$, is defined as the length of a shortest path from $u$ to $v$ in $G$; if there is no path, then $d_G(u,v) = \infty$.
We denote by $S_G(u,v)$ the set of all the shortest paths from $u$ to $v$ in $G$, that is, the set $\{\pi \mid \pi \text{ is a path from } u \text{ to } v \text{ in } G \text { with } |\pi| = d_G(u,v) \}$.

A graph $G$ is {\em connected} if, for every two distinct vertices $u,v \in V(G)$, there exists a path from $u$ to $v$.
We denote by $\A(v,G)$ the set of all connected subgraphs of $G$ that contain $v$, that is, the set $\{G' \subseteq G \mid v \in V(G') \text{ and } G' \text{ is connected}\}$.
By abuse of notation, we may treat $\A(\cdot,\cdot)$ as a function of the form $\vg \rightarrow \mathcal{P}(\gr)$; as usual, $\ps(S)$ denotes the powerset of a set $S$.
A {\em connected component} (or simply {\em component}) of $G$ is an induced subgraph $G[S]$ of $G$, where $S \subseteq V(G)$, such that $G[S]$ is connected, and, for every $v \in V(G) \setminus S$, there is no path in $G$ from $v$ to a vertex of $S$.
It is clear that whenever $G$ is connected, the only component of $G$ is $G$ itself. We denote by $\comp(G)$ all the components of $G$, that is, the set of graphs $\{G' \mid G' \text{ is a component of } G\}$.
Let $K_v(G)$ be the set of vertices of the component of $G$ containing the vertex $v$.

Two graphs $G_1$ and $G_2$ are {\em isomorphic}, denoted $G_1\simeq G_2$, if there exists a bijective function $h: V(G_1) \to V(G_2)$ such that $\{v,u\} \in E(G_1)$ iff $\{h(v),h(u)\}\in E(G_2)$.
Furthermore, given the vertices $v_1 \in V(G_1)$ and $v_2 \in V(G_2)$, we say that the pairs $(v_1, G_1)$ and $(v_2,G_2)$ are isomorphic, denoted $(v_1, G_1) \simeq (v_2,G_2)$, if $G_1\simeq G_2$ witnessed by $h$ and $h(v_1)= v_2$.

\medskip

\noindent 
\paragraph{Centrality Measures.} A centrality measure assigns a score to a vertex $v$ in a graph $G$, which reflects the importance of $v$ in $G$. In other words, we adopt the standard assumption that the higher the score of a vertex $v$ in $G$, the more important or ``central'' $v$ is in $G$.
Furthermore, it is typically assumed that the values assigned by a measure to the vertices of a graph do not depend on the names of the vertices, but only on the structure of the graph. In other words, two isomorphic vertices occurring in isomorphic graphs should be assigned the same importance; the latter property is usually called {\em closure under isomorphism} or {\em anonymity}. The formal definition of the notion of centrality measure follows:

\begin{definition}[\textbf{Centrality Measure}]\label{def:centrality-measure}
	A {\em centrality measure} (or simply {\em measure}) is a function $\Ce : \vg\ \rightarrow\ \mathbb{R}$ such that, for every two pairs $(v_1,G_1) \in \vg$ and $(v_2,G_2) \in \vg$, $(v_1, G_1) \simeq (v_2,G_2)$ implies $\Ce(v_1,G_1) = \Ce(v_2,G_2)$. \hfill\markfull
\end{definition}

We proceed to recall three known centrality measures that will be used throughout the paper; more centrality measures are discussed in Section~\ref{sec:classification}.

\begin{description}
	\item[\textit{Stress.}] This is a well-known centrality measure introduced in the 1950s~\cite{Shim53}. It measures the centrality of a vertex by counting the number of shortest paths that go via that vertex. For a graph $G$ and a vertex $v \in V(G)$, let $S_{G}^{v}(u,w)$ be the set of paths $\{\pi \in S_{G}(u,w) \mid \pi \text{ contains } v\}$. The {\em stress centrality} of $v$ in $G$ is defined as follows:
	\[
	\stress(v,G)\ =\ \sum_{u,w \in V(G) \setminus \{v\}} \left|S_{G}^{v}(u,w)\right|.
	\]

	\item[\textit{All-Subgraphs.}] This measure was recently introduced in the context of graph databases~\cite{RiSa20}. It states that a vertex is more central if it participates in more connected subgraphs. Formally, given a graph $G$ and a vertex $v \in V(G)$, the {\em all-subgraphs centrality} of $v$ in $G$ is
	\[
	\allsub(v,G)\ =\ \log_2 |\A(v,G)|.
	\]

	\item[\textit{Closeness.}] This is a well-known measure introduced back in the 1960s~\cite{Sabi66}. It is usually called a geometrical measure since it relies on the distance inside a graph. It essentially states that the closer a vertex is to everyone in the graph the more central it is. Formally, given a graph $G$ and a vertex $v \in V(G)$, the {\em closeness centrality} of $v$ in $G$ is the ratio
	\[
	\closeness(v,G) = \frac{1}{\sum_{u \in K_v(G)} d_G(v,u)}.
	\]
	Let us clarify that we define the sum of distances inside a component of $G$ since the distance between two vertices in different components of $G$ is by definition infinite.
\end{description}


\section{Subgraph-based Centrality Measures}\label{sec:family}

As already discussed in the Introduction, a natural way of measuring the importance of a vertex in a graph is to count the relevant connected subgraphs surrounding it, and then apply a certain filtering function from the non-negative integers to the reals on top of the count. Of course, the relevant subgraphs and the adopted filtering function are determined by the intention of the centrality measure.
Interestingly, both the stress and the all-subgraphs centrality measures are actually based on this principle. Let us elaborate further on this. Consider a graph $G$ and a vertex $v \in V(G)$:

\begin{itemize}
	\item For the stress centrality, the important subgraphs for $v$ in $G$ are the shortest paths that go via $v$ in $G$, and the filtering function is $f_{\times 2}(x)=2x$ since each shortest path is counted twice.
	In other words, with $G_{\pi} $ being the  graph that corresponds to a path $\pi$,
	\[
	\stress(v,G) \ =\ f_{\times 2}\left(\left | \bigcup_{u,w \in V(G) \setminus \{v\}} \left\{G_\pi \mid \pi \in S_{G}^{v}(u,w)\right\} \right | \right).
	\]

	\item For the all-subgraphs centrality, the important subgraphs for $v$ in $G$ are the connected subgraphs of $G$ that contain $v$, that is, the set $\A(v,G)$, and the filtering function is $\log_2$.
	Indeed, by definition, we have that
	\[
	\allsub(v,G)\ =\ \log_2 |\A(v,G)|.
	\]
\end{itemize}

We proceed to formalize the above simple principle, originally introduced in~\cite{RiSa20}, which gives rise to a family of centrality measures, and then highlight our main research questions.


\medskip

\noindent 
\paragraph{Subgraph-based Centrality Measures.} We first need a mechanism that allows us to specify what are the important subgraphs for a vertex $v$ in a graph $G$. This is done via the notion of {\em subgraph family}, which is defined as a function from vertex-graph pairs to sets of graphs that is closed under isomorphism, that is, a function $\F : \vg\ \rightarrow\ \mathcal{P}(\gr)$ such that:

\begin{itemize} 
	\item for every $(v,G) \in \vg$, $\F(v,G) \subseteq \A(v,G)$, that is, $\F$ assigns to each $(v,G) \in \vg$ a set of connected subgraphs of $G$ surrounding $v$, and
	
	\item for every two pairs $(v_1,G_1) \in \vg$ and $(v_2,G_2) \in \vg$ such that $(v_1,G_1) \simeq (v_2,G_2)$ witnessed by $h$, there exists a bijection $\mu : \F(v_1,G_1) \rightarrow \F(v_2,G_2)$ such that, for every $G' \in \F(v_1,G_1)$, $\mu(G') = (\{h(v) \mid v \in V(G')\},\{\{h(v),h(u)\} \mid (v,u) \in G'\})$.
\end{itemize}

\noindent We also need the notion of {\em filtering function}, which, as said above, is simply a function of the form $f : \mathbb{N} \rightarrow \mathbb{R}$.
We are now ready to define subgraph-based centrality measures:

\begin{definition}[\textbf{$\langle \F,f \rangle$-measure}]\label{def:subc}
	Consider a subgraph family $\F$ and a filtering function $f$. The {\em $\langle \F,f \rangle$-measure} is the function 
	$\subc{\F,f} : \vg\ \rightarrow\ \mathbb{R}$
	such that, for every pair $(v,G) \in \vg$, it holds that $\subc{\F,f}(v,G) = f(|\F(v,G)|)$.
	%
	%
	\hfill\markfull
\end{definition}

Since, by definition, subgraph families are closed under isomorphism, it is straightforward to see that each $\langle \F,f \rangle$-measure defines a valid centrality measure.

\begin{lemma}
	For a subgraph family $\F$ and a filtering function $f$, it holds that the {\em $\langle \F,f \rangle$-measure} is a centrality measure. 
\end{lemma}

We say that a centrality measure $\Ce$ is a subgraph-based centrality measure if there are a subgraph family $\F$ and a filtering function $f$
such that $\Ce$ coincides with the $\langle \F,f \rangle$-measure, i.e., for every pair $(v,G) \in \vg$, $\Ce(v,G) = \subc{\F,f}(v,G)$.
Coming back to our discussion on stress and all-subgraph centralities, assuming that $\mathsf{S}$ is the subgraph family such that 
\[
\mathsf{S}(v,G)\ =\ \bigcup_{u,w \in V(G) \setminus \{v\}} \left\{G_\pi \mid \pi \in S_{G}^{v}(u,w)\right\},
\]
it is straightforward to verify that
\[
\stress\ =\ \subc{\mathsf{S},f_{\times 2}} \quad\text{and}\quad \allsub = \subc{\mathsf{A},\log_2}.
\]

\noindent 
\paragraph{Main Research Questions.} Having the family of subgraph-based centrality measures in place, the natural question that comes up concerns its absolute expressive power. In other words, we are interested in the following research question:

\medskip

\noindent \textit{\textbf{Question I: When is a centrality measure a subgraph-based centrality measure?}}

\medskip

One may wonder whether the above question is conceptually trivial in the sense that every centrality measure can be expressed as a subgraph-based centrality measure by choosing the subgraph family and the filtering function in the proper way as done for $\stress$ and $\allsub$.
It turns out that there are measures that are {\em not} subgraph-based.

\begin{proposition}\label{pro:no-subc-measure}
	There is a centrality measure that is not a subgraph-based measure.
\end{proposition}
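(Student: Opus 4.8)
The plan is to exploit the defining rigidity of subgraph motif measures: the value $\subc{\F,f}(v,G) = f(|\F(v,G)|)$ depends on the pair $(v,G)$ only through the single natural number $|\F(v,G)|$, and since $\F(v,G) \subseteq \A(v,G)$ we always have $0 \le |\F(v,G)| \le |\A(v,G)|$. The consequence I would extract is a ceiling on how many distinct scores such a measure can produce on pairs whose surrounding set of connected subgraphs is small. To make this concrete I would look at the single-vertex graphs $G_v = (\{v\},\emptyset)$, for which the only connected subgraph containing $v$ is $G_v$ itself, so $\A(v,G_v) = \{G_v\}$ and hence $|\A(v,G_v)| = 1$. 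For any subgraph motif measure this forces $|\F(v,G_v)| \in \{0,1\}$, and therefore its value on $(v,G_v)$ lies in the two-element set $\{f(0),f(1)\}$, no matter which vertex $v$ is chosen.

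With this bound in hand, I would construct a concrete counterexample that overshoots it. Using that $\ins{V}$ is countably infinite, I fix three distinct vertices $v_1,v_2,v_3 \in \ins{V}$ and define a centrality measure $\Ce : \vg \to \mathbb{R}$ that separates the corresponding single-vertex graphs, for instance $\Ce(v_i,G_{v_i}) = i$ for $i \in \{1,2,3\}$ and $\Ce(v,G) = 0$ for every other pair. This $\Ce$ is a well-defined function on $\vg$ and assigns three pairwise-distinct scores to single-vertex graphs.

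I would then close the argument by contradiction and pigeonhole. Suppose $\Ce = \subc{\F,f}$ for some subgraph family $\F$ and filtering function $f$. By the observation above, the three values $\subc{\F,f}(v_1,G_{v_1})$, $\subc{\F,f}(v_2,G_{v_2})$, $\subc{\F,f}(v_3,G_{v_3})$ all lie in $\{f(0),f(1)\}$, so at least two of them coincide, contradicting the fact that $\Ce$ takes three distinct values on these pairs. Hence no such $\F$ and $f$ exist, and $\Ce$ is not a subgraph motif measure. The argument is elementary and I anticipate no genuine obstacle; the only point deserving care is the verification that $\A(v,G_v) = \{G_v\}$, which relies on graphs having non-empty vertex sets so that there is no empty subgraph to count. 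Conceptually this is precisely the kernel of the ``bounded value'' property $P$ flagged in the Introduction: a subgraph motif measure can realise at most $k+1$ distinct scores on the class of pairs with $|\A(v,G)| \le k$, and single-vertex graphs already saturate the case $k=1$.
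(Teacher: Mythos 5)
Your proof is correct and takes essentially the same approach as the paper's: both arguments pin the contradiction on a pigeonhole mismatch between the number of distinct values a measure assigns and the number of available sizes $|\F(v,G)| \in \{0,\ldots,|\A(v,G)|\}$ on pairs with small $|\A(v,G)|$ (the kernel of the bounded value property). The only difference is that your witness is slightly leaner — three distinct values on single-vertex graphs, where $|\A(v,G_v)|=1$ forces all values into $\{f(0),f(1)\}$ — whereas the paper realises four values against three sizes using $\{G_v \mid v \in \ins{V}\} \cup \{\hat{G}\}$ with $\hat{G}=(\{v_1,v_2,v_3\},\{\{v_1,v_2\}\})$; your verification that $\A(v,G_v)=\{G_v\}$ (no empty-vertex-set subgraph) is exactly the point that needs care, and it holds under the paper's definitions.
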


\begin{proof}
	Consider the centrality measure $\Ce$ such that, for every $(v,G) \in \vg$, it holds that $\Ce(v,G)\ =\ |V(G)|$, i.e., it simply assigns to each vertex $v$ in a graph $G$ the number of vertices occurring in $G$. It suffices to show that $\Ce$ is not subgraph-based even if we focus on the set of graphs $\gr^\star$ consisting of $G_1 = (\{u_1\},\emptyset)$, $G_2 = (\{u_2,v_2\},\emptyset)$, and $G_3 = (\{u_3,v_3,w_3\},\emptyset)$.
	By contradiction, assume that $\Ce$ is a subgraph-based measure over $\gr^\star$. Thus, there exists a subgraph family $\F$ and a filtering function $f$ such that, for every $G \in \gr^\star$ and $v \in V(G)$, $\Ce(v,G) = \subc{\F,f}(v,G)$. We observe that:
	
		\begin{enumerate}
			\item For every $(v,G) \in \ins{V} \times \gr^\star$ with $v \in V(G)$, it holds that $\subc{\F,f}(v,G) \in \{1,2,3\}$, i.e., we have three distinct values. This follows by the definition of $\Ce = \subc{\F,f}$.
			
			\item For every $(v,G) \in \ins{V} \times \gr^\star$, it holds that $|\F(v,G)| \in \{0,1\}$, i.e., we have two possible sizes for the sets of connected subgraphs.
		\end{enumerate}
		
		\noindent Now, by the pigeonhole principle, we can safely conclude that there are two distinct pairs $(v,G),(u,G') \in  \ins{V} \times \gr^\star$ with $|\F(v,G)| = |\F(u,G')|$ such that $\subc{\F,f}(v,G) \neq \subc{\F,f}(u,G')$. But this contradicts the fact that $f$ is a function, and the claim follows.
\end{proof}

As we shall see, not only artificial measures as the one employed in the proof of Proposition~\ref{pro:no-subc-measure}, but also well-known centrality measures from the literature (such as $\closeness$) are {\em not} subgraph-based. We are going to prove such inexpressibility results by using the technical tools developed towards answering Question I.

In several applications that involve graphs, we are more interested in the relative than the absolute importance of a vertex in a graph. More precisely, we are interested in the ranking of the vertices of a graph induced by a measure $\Ce$, and not in the absolute value assigned to a vertex by $\Ce$. This brings us to the next technical notion:

\begin{definition}[\textbf{Induced Ranking}]\label{def:ranking}
	Let $\Ce$ be a centrality measure. The {\em ranking induced by $\Ce$}, denoted $\rank(\Ce)$, is the binary relation
	\[
	\left\{((u,G),(v,G)) \mid u,v \in V(G) \text{ and } \Ce(u,G) \leq \Ce(v,G)\right\}
	\]		
	over $\vg$.
	$\Ce$ is a {\em subgraph-based centrality measure relative to the induced ranking} if there are a subgraph family $\F$ and a filtering function $f$ with $\rank(\Ce) = \rank(\subc{\F,f})$. \hfill\markfull
\end{definition}

Interestingly, although the measure employed in the proof of Proposition~\ref{pro:no-subc-measure} is not subgraph-based, it is easy to show that it is a subgraph-based measure relative to the induced ranking.
In particular, by defining the subgraph family $\F$ as $\F(v,G) = \{G_v\}$, for every $(v,G) \in \vg$, and the filtering function as the identity, it is not difficult to see that $\rank(\Ce) = \rank(\subc{\F,f})$.
This observation brings us to our next research question:

\medskip

\noindent \textit{\textbf{Question II: When is a centrality measure a subgraph-based centrality measure relative to the induced ranking?}}

\medskip

As we shall see, the above question is conceptually non-trivial, i.e.,  there are measures that are {\em not} subgraph-based measures relative to the induced ranking. In particular, we will see that there are well-established measures (such as $\closeness$) that are {\em not} subgraph-based centrality measures relative to the induced ranking. Such inexpressibility results are shown by exploiting the tools developed towards answering Question II.
\section{Characterizing Subgraph-based Centrality Measures}\label{sec:characterization}

We proceed to provide an answer to Question I. More precisely, our goal is to isolate a structural property $P$ over centrality measures that precisely characterizes subgraph-based measures, that is, for an arbitrary measure $\Ce$, $\Ce$ is a subgraph-based measure iff $\Ce$ enjoys $P$.
Interestingly, the desired property can be somehow extracted from the proof of Proposition~\ref{pro:no-subc-measure}. The crucial intuition provided by that proof is that the absolute expressiveness of subgraph-based measures is tightly related to the amount of connected subgraphs that are available for assigning different centrality values to vertices. In other words, a measure that assigns ``too many'' values among vertices that are surrounded by ``too few'' connected subgraphs cannot be expressed as a subgraph-based measure. We proceed to formalize this intuition.

We first collect all the different values assigned by a centrality measure $\Ce$ to the vertices of a graph $G$ that are surrounded by a bounded number of connected subgraphs of $G$. In particular, for $n>0$, we define the set of real values
\[
\val{G}{n}(\Ce)\ =\ \left\{\Ce(v,G) \mid v \in V(G) \text{ and } |\A(v,G)| \leq n\right\}.
\]
We can then easily collect all the values assigned by $\Ce$ to the vertices of $\ins{V}$ that are surrounded by a bounded number of connected subgraphs in some graph. In particular, for $n>0$,
\[
\val{}{n}(\Ce)\ =\ \bigcup_{G \in \gr} \val{G}{n}(\Ce).
\]
We now define the following property over centrality measures:

\begin{definition}[\textbf{Bounded Value Property}]\label{def:bvp}
	A measure $\Ce$ enjoys the {\em bounded value property} if, for every $n>0$, $|\val{}{n}(\Ce)| \leq n+1$. \hfill\markfull
\end{definition}

The bounded value property captures the key intuition discussed above. It actually bounds the number of different values that can be assigned among vertices that are surrounded by a limited number of connected subgraphs; hence the name ``bounded value property''.
Observe that the measure $\Ce$ devised in the proof of Proposition~\ref{pro:no-subc-measure} does not enjoy the bounded value property; indeed, $|\val{}{1}(\Ce)| \geq 3 > 2$.
Interestingly, the bounded value property is all we need towards a precise characterization of subgraph-based measures.

\begin{theorem}\label{the:characterization}
	Consider a centrality measure $\Ce$. The following statements are equivalent:
	
	\begin{enumerate}
		\item $\Ce$ is a subgraph-based centrality measure.
		
		\item $\Ce$ enjoys the bounded value property.
	\end{enumerate}
\end{theorem}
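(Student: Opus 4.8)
The plan is to prove the two directions separately, with the bulk of the work lying in the implication $(2) \Rightarrow (1)$.

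\medskip

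\noindent\textbf{Direction $(1) \Rightarrow (2)$.} This is the easy direction. Suppose $\Ce = \subc{\F,f}$ for some subgraph family $\F$ and filtering function $f$. Fix $n>0$. I claim that any vertex $v$ in any graph $G$ with $|\A(v,G)| \leq n$ receives one of at most $n+1$ possible values. Indeed, since $\F(v,G) \subseteq \A(v,G)$, we have $|\F(v,G)| \in \{0,1,\ldots,n\}$, so there are at most $n+1$ possible sizes. As $\Ce(v,G) = f(|\F(v,G)|)$ and $f$ is a function, the value $\Ce(v,G)$ is determined by $|\F(v,G)|$; hence it ranges over a set of at most $n+1$ reals, namely $\{f(0),f(1),\ldots,f(n)\}$. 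Therefore $|\val{}{n}(\Ce)| \leq n+1$, as required. (This is essentially the pigeonhole argument from Proposition~\ref{pro:no-subc-measure}, run in the contrapositive.)

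\medskip

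\noindent\textbf{Direction $(2) \Rightarrow (1)$.} Here I must \emph{construct} a subgraph family $\F$ and a filtering function $f$ witnessing $\Ce = \subc{\F,f}$, using only the bounded value hypothesis. The guiding idea is that the value $\val{}{n}(\Ce)$ grows by at most one new value each time $n$ increments (a consequence of $|\val{}{n}(\Ce)| \leq n+1$ together with the obvious monotonicity $\val{}{n}(\Ce) \subseteq \val{}{n+1}(\Ce)$). So for each vertex-graph pair I want to pick $|\F(v,G)|$ to be some number $k \leq |\A(v,G)|$ that encodes exactly which value $\Ce(v,G)$ takes, and then set $f(k)$ to that value consistently. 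First I would assign, to each real value $r$ appearing in the range of $\Ce$, an \emph{index} $\iota(r) \in \mathbb{N}$, via the natural stratification of the values by the least $n$ for which $r \in \val{}{n}(\Ce)$. Concretely: for a value $r$ attained at some $(v,G)$, let $n_r$ be the minimum of $|\A(v,G)|$ over all pairs where $\Ce$ takes value $r$. Then define $f$ so that $f(\iota(r)) = r$ and set $\F(v,G)$ to be any subset of $\A(v,G)$ of size $\iota(\Ce(v,G))$. The key technical step is to verify that a legal choice of indices exists, i.e.\ that one can injectively assign to the values an index $\iota(r) \leq n_r$ so that whenever $\Ce(v,G) = r$ we indeed have $\iota(r) \leq |\A(v,G)|$ (so that a subset of $\A(v,G)$ of that size exists).

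The \textbf{main obstacle} I anticipate is exactly this index-assignment step: I must show that the bounded value property guarantees enough ``room'' to give every value a distinct index not exceeding the number of subgraphs available at the sparsest pair realizing it. I would argue this by induction on $n$, processing the sets $\val{}{n}(\Ce)$ in increasing order of $n$. At stage $n$, the newly appearing values are $\val{}{n}(\Ce)\setminus\val{}{n-1}(\Ce)$ (with $\val{}{0}$ taken as empty), and the bound $|\val{}{n}(\Ce)| \leq n+1$ forces the total number of values seen by stage $n$ to fit into the index set $\{0,1,\ldots,n\}$; since a value first appears at stage $n$ exactly when its sparsest realizing pair has $|\A(v,G)| = n$, I can assign each such value a fresh unused index in $\{0,\ldots,n\}$, and these indices are all $\leq n \leq |\A(v,G)|$ at every pair realizing that value (because $n$ is the minimum such $|\A(v,G)|$). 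A subtlety to handle carefully is that $\val{}{n}(\Ce)$ collects values across \emph{all} graphs, so I must confirm the indexing is global and that $f$ remains single-valued; this is secured by indexing \emph{values} rather than pairs. Once the indices are fixed, defining $\F(v,G)$ to have size $\iota(\Ce(v,G))$ (choosing an arbitrary subset of $\A(v,G)$ of that cardinality, which exists since $\iota(\Ce(v,G)) \leq |\A(v,G)|$) and $f(k) = r$ whenever $k = \iota(r)$ (and $f(k)$ arbitrary otherwise) yields $\subc{\F,f}(v,G) = f(\iota(\Ce(v,G))) = \Ce(v,G)$, completing the construction.
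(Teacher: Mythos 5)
Your proposal is correct and takes essentially the same route as the paper: your forward direction is the paper's pigeonhole observation (at most $n+1$ sizes $|\F(v,G)| \in \{0,\ldots,n\}$ are available when $|\A(v,G)| \leq n$, and $f$ is a function), and your backward direction constructs exactly the paper's witness, encoding each centrality value as a cardinality bounded by the least $n$ at which it enters $\val{}{n}(\Ce)$ and defining $f$ as the index-to-value map. The only difference is bookkeeping: the paper obtains the injective index assignment by enumerating $\val{}{}(\Ce)$ along a stratified total order $\preceq_{\Ce}$ (so each $\val{}{n}(\Ce)$ is an initial segment and the $i$-th value $a_i$ receives index $i-1$), whereas you assign fresh indices greedily stage by stage via the same counting bound $|\val{}{n}(\Ce)| \leq n+1$, which is the identical argument in a slightly more flexible form.
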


\begin{proof}
$(1 \Rightarrow 2)$  By contradiction, assume that $\Ce$ does not enjoy the bounded value property, namely there exists an integer $n\geq1$ such that $|\val{}{n}(\Ce)| > n+1$. 
By hypothesis, $\Ce$ is a subgraph-based centrality measure, and thus, there exist a subgraph family $\F$ and a filtering function $f$ such that the following holds: for every $(v,G)\in \vg$, $\Ce(v,G) = \subc{\F,f}(v,G)$. 
We now define the set 
\[
B_n\ =\ \left\{|\F(v,G)|\mid (v,G)\in \vg \text{ and } |\A(v,G)| \leq n\right\}. 
\]
Clearly, $|B_n| \leq n+1$ since $\F(v,G) \subseteq \A(v,G)$. 
Let $h : \val{}{n}(\Ce) \to B_n$ be such that
\[
h(\Ce(v,G))\ =\ |\F(v,G)|.
\]
By the pigeonhole principle, $h$ is not injective, i.e., there exist $\Ce(v_1,G_1)$ and $\Ce(v_2,G_2)$ such that $\Ce(v_1,G_1) \neq \Ce(v_2,G_2)$ but $|\F(v_1,G_1)| = |\F(v_2,G_2)|$. This contradicts the fact that $\Ce(v_1,G_1) = f(|\F(v_1,G_1)|) \neq f(|\F(v_2,G_2)|) = \Ce(v_2,G_2)$, and the claim follows.

\medskip

$(2 \Rightarrow 1)$ The goal is to show that there exist a subgraph family $\F$ and a filtering function $f$ such that, for every $(v,G) \in \vg$, $\Ce(v,G) = \subc{\F,f}(v,G)$.
We start by defining a total order $\preceq_{\Ce}$ over the set of values $\val{}{}(\Ce) = \bigcup_{i=1}^{\infty}\val{}{i}(\Ce)$. 
By definition, for every $n,m > 0$ such that $n \leq m$, it holds that $\val{}{n}(\Ce)\subseteq \val{}{m}(\Ce)$. In other words, as we increase the integer $n$ we are adding new values to the set $\val{}{n}(\Ce)$.
We can now define the binary relation $\preceq_{\Ce}$ over $\val{}{}(\Ce)$ as follows: for each $a,b \in \val{}{}(\Ce)$, if there exists $n$ such that $a \in \val{}{n}(\Ce)$ but $b \not\in \val{}{n}(\Ce)$ then $a \preceq_{\Ce} b$, if not, then $a \preceq_{\Ce} b$ if $a\leq b$. 
It is easy to see that $\preceq_{\Ce}$ is a total order over $\val{}{}(\Ce)$, and thus, it is a total order over $\val{}{n}(\Ce)$ for each $ n> 0$. For notational convenience, in the rest of the proof we assume that  $\val{}{}(\Ce) = \{a_1,a_2,a_3,\ldots\}$ and $a_1 \preceq_{\Ce} a_2 \preceq_{\Ce} a_3 \preceq_{\Ce} \cdots$.

By exploiting the total order $\preceq_{\Ce}$ over $\val{}{}(\Ce)$, we proceed to define a subgraph family $\F$. Consider an arbitrary pair $(v,G) \in \vg$, and let $n = |\A(v,G)|$. 
By hypothesis, $\Ce$ enjoys the bounded value property, which in turn implies that $|\val{}{n}(\Ce)| \leq n+1$. Therefore, $\Ce(v,G)$, which belongs to $\{a_1,a_2,...,a_{|\val{}{n}(\Ce)|}\}$, is equal to $\val{}{n}(\Ce)$. 
We further observe that $\A(v,G)$ is a finite set, and we let $\A(v,G) = \{S_1,S_2,\ldots,S_n\}$. 
Here we assume an arbitrary order for $\A(v,G)$ that has the following property: for every pair $(v',G')$ with $(v,G)\simeq (v',G')$, assuming that $\A(v',G') = \{S'_1,S'_2,\ldots,S'_n\}$, it holds that $(v,S_i)\simeq (v'_i,S'_i)$ for every $i\in\{1,\ldots,n\}$.
The subgraph family $\F$ is defined as follows:
\[
\Ce(v,G) = a_i \quad \text{implies} \quad \F(v,G) = \{S_1,...,S_{i-1}\}.
\]
This is indeed a subgraph family since $\F(v,G) \subseteq \A(v,G)$, while the chosen order for $\A(v,G)$ and the fact that $\Ce$ is (by definition) closed under isomorphism ensures closure under isomorphism. Notice that $|\F(v,G)| = i-1$ for $i \in \{1,\ldots,|\val{}{n}(\Ce)|+1\}$.
Finally, we define the filtering function $f : \mathbb{N} \to \val{}{}(\Ce)$ as follows: for each $i \in \mathbb{N}$,
\[
f(i)\ =\ a_{i+1}. 
\]

We proceed to show that $\F$ and $f$ capture our intention, that is, for every $(v,G) \in \vg$, $\Ce(v,G) = \subc{\F,f}(v,G)$, which will establish Theorem~\ref{the:characterization} .
Let $n = |\A(v,G)|$. If $\Ce(v,G) = a_i\in \val{}{n}(\Ce)$, then $|\F(v,G)| = i-1$. Therefore, $f(|\F(v,G)|) = \subc{\F,f}(v,G) = \Ce(v,G)$.
Conversely, if $\subc{\F,f}(v,G) = a_i$, then $|\F(v,G)| = i-1$, and thus, by construction, $\Ce(v,G) = a_i$.
\end{proof}

The above characterization, apart from giving a definitive answer to Question I, it provides a useful tool for establishing inexpressibility results. To show that a centrality measure $\Ce$ is not a subgraph-based measure it suffices to show that there exists an integer $n>0$ such that $|\val{}{n}(\Ce)| > n+1$. For example, we can show that 
$|\val{}{5}(\closeness)| > 6$, and therefore:

\begin{proposition}\label{pro:closeness}
	$\closeness$ is not a subgraph-based measure.
\end{proposition}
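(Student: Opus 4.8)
The plan is to invoke Theorem~\ref{the:characterization}, so that proving $\closeness$ is not a subgraph motif measure reduces to exhibiting a single integer $n>0$ witnessing a violation of the bounded value property, namely $|\val{}{n}(\closeness)| > n+1$. As the statement suggests, the target is $n=5$: I would construct a family of connected graphs, each containing a vertex $v$ surrounded by at most $5$ connected subgraphs (i.e.\ $|\A(v,G)| \leq 5$), yet such that the closeness values $\closeness(v,G)$ collectively realize at least $7$ distinct reals. Since $\val{}{5}(\closeness)$ gathers all such values over all graphs in $\gr$, producing $7$ distinct values among vertices with $|\A(v,G)| \leq 5$ immediately gives $|\val{}{5}(\closeness)| \geq 7 > 6 = 5+1$.

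The key observation driving the construction is that $|\A(v,G)|$ depends only on the local connected-subgraph structure around $v$, whereas $\closeness(v,G) = 1/\sum_{u \in K_v(G)} d_G(v,u)$ depends on the global distances from $v$ to every vertex in its component. This decoupling is exactly what the bounded value property forbids: I want to keep the number of connected subgraphs containing $v$ small and fixed (or small across a few cases) while driving the sum of distances $\sum_u d_G(v,u)$ through many different integer values, thereby producing many distinct reciprocals. The natural gadget is a path (or a path with $v$ at or near one end): if $v$ has a small, bounded local neighbourhood but sits at the end of a long path, then $\A(v,G)$ stays small while the distance sum grows without bound. First I would identify the simplest shapes — for instance, short paths $P_1, P_2, P_3, \ldots$ with $v$ as an endpoint — compute both $|\A(v,G)|$ and $\closeness(v,G)$ for each, and check how many distinct closeness values I can collect before $|\A(v,G)|$ exceeds $5$.

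The main obstacle I anticipate is the tension built into the definition of $\A$: on a path, adding length simultaneously increases both the distance sum (which I want) and the count of connected subgraphs containing $v$ (which I must keep $\leq 5$). On a path with $v$ at an endpoint, the connected subgraphs containing $v$ are precisely the sub-paths starting at $v$, so $|\A(v,P_k)| = k$ grows linearly with the number of vertices; this caps me at paths of at most $5$ vertices if I insist $|\A(v,G)| \leq 5$, yielding only a handful of closeness values and likely not enough. The fix is to decouple the two quantities by using vertices that are \emph{far} from $v$ but contribute little to the subgraph count — for example attaching long pendant paths or leaves at distance that inflate $\sum_u d_G(v,u)$ while contributing only a controlled number of new connected subgraphs through $v$, or, more cleanly, placing $v$ as a leaf whose single neighbour anchors a structure whose distances to $v$ vary across graphs. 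I would carefully engineer graphs $G_1, \ldots, G_7$ (or more) in which $v$ is a degree-one vertex so that $\A(v,G_i)$ is tightly constrained, yet the remaining component is shaped to give seven pairwise-distinct distance sums.

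Concretely, I would tabulate for each candidate graph the pair $(|\A(v,G)|,\ \sum_{u \in K_v(G)} d_G(v,u))$, verify $|\A(v,G)| \leq 5$ in every case, and confirm the seven distance sums are distinct (hence the seven closeness values are distinct, as $x \mapsto 1/x$ is injective on positive reals). Once seven such witnesses are in hand, the conclusion $|\val{}{5}(\closeness)| > 6$ is immediate, and Theorem~\ref{the:characterization} finishes the argument. The only genuinely delicate part is the bookkeeping that simultaneously keeps the subgraph count bounded and forces enough distinct distance sums; everything else is routine verification.
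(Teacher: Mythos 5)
Your overall route is exactly the paper's: reduce via Theorem~\ref{the:characterization} to violating the bounded value property at $n=5$, observe that $\closeness$ is determined by the distance sum $\sd(v,G)=\sum_{u\in K_v(G)}d_G(v,u)$, harvest distinct sums from endpoints of line graphs, and add gadget vertices to push past $n+1$ (the paper uses the middle vertex of $L_3$, with $|\A(2,L_3)|=4$ and sum $2$, and a leaf of the star $S_4$, with $|\A|=1+2^2=5$ and sum $5$). However, one concrete step in your plan fails as stated. Since $|\A(v,G)|$ is at least the number of vertices in the component of $v$, any witness with $|\A(v,G)|\leq 5$ lives in a component with at most $5$ vertices, and an exhaustive check of such components shows the only achievable pairs $(|\A(v,G)|,\sd(v,G))$ come from: the isolated vertex (count $1$, sum $0$), endpoints of $L_2,\ldots,L_5$ (counts $2,3,4,5$; sums $1,3,6,10$), the middle of $L_3$ (count $4$, sum $2$), and a leaf of $S_4$ (count $5$, sum $5$). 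So the attainable sums are exactly $\{0,1,2,3,5,6,10\}$: only \emph{six} of them are positive, and your demand for seven witnesses whose closeness values are distinct ``as $x \mapsto 1/x$ is injective on positive reals'' cannot be met at $n=5$. The paper's seventh witness is precisely $L_1$, the single-vertex graph with $\sd=0$, where the defining formula reads $1/0$ and one must invoke a convention for isolated vertices (the appendix adopts the value $0$ in the directed setting); your injectivity argument does not cover that witness.

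There are two easy repairs. Either include the single-vertex graph and argue separately that its conventional closeness value differs from the six reciprocals $1,\tfrac12,\tfrac13,\tfrac15,\tfrac16,\tfrac1{10}$, which recovers the paper's proof essentially verbatim; or shift to $n=6$, where vertex $2$ of $L_4$ (count $6$, sum $4$) and the endpoint of $L_6$ (count $6$, sum $15$) become available, yielding the eight positive sums $\{1,2,3,4,5,6,10,15\}$ and hence $|\val{}{6}(\closeness)|\geq 8>7$, so that your positive-reals injectivity suffices. Everything else in your proposal, including the decoupling intuition and the degree-one anchoring trick (for a leaf $v$ with neighbour $u$ one has $|\A(v,G)|=1+|\A(u,G-v)|$, which is how the $S_4$-leaf gadget arises), matches the paper's argument; the flaw is confined to the claim that seven positive distance sums exist at $n=5$, which the tabulation step you describe would itself have exposed.
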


Without Theorem~\ref{the:characterization} in place, it is completely unclear how one can prove that $\closeness$ (or any other established measure) is not a subgraph-based measure. More inexpressibility results concerning well-established centrality measures are discussed in Section~\ref{sec:classification}.
\section{Characterizing Subgraph-based Measures Relative to the Induced Ranking}\label{sec:characterization-ranking}

We now focus on Question II. Our goal is to isolate a structural property $P$ over centrality measures that precisely characterizes subgraph-based measures relative to the induced ranking, i.e., for an arbitrary measure $\Ce$, $\Ce$ is subgraph-based relative to the induced ranking iff $\Ce$ enjoys the property $P$.
%
It turns out that $P$ can be defined by exploiting a certain notion of graph coloring relative to a centrality measure.
%
%

%

\medskip

\noindent \paragraph{Graph Colorings.} The high-level idea is to consider the sizes of the available subgraph families that can be assigned to a vertex $v$ in a graph $G$, i.e., the set of integers $\{0,\ldots,|\A(v,G)|\}$, as available colors. We can then refer to a precoloring of $\vg$ (i.e., of all the possible graphs) as a function $\pc : \vg \rightarrow \mathbb{N}$ that assigns to each vertex $v$ in a graph $G$ only available colors from $\{0,\ldots,|\A(v,G)|\}$.
Then, the goal is to isolate certain properties of such a precoloring of $\vg$ that leads to the desired characterization, i.e., a measure $\Ce$ is subgraph-based relative to the induced ranking iff there exists a precoloring of $\vg$ that enjoys the properties in question.
Such a characterization tells us that for a centrality measure being subgraph-based relative to the induced ranking is tantamount to the fact that there are enough colors (i.e., sizes of sugbraph families, but {\em without} considering their actual topological structure) that allow us to color $\vg$ in a valid way, namely in a way that the crucial properties are satisfied.
%
%
We proceed to formalize the above discussion about colorings.

Given a set $S \subseteq \vg$, a {\em precoloring of $S$} is a function $\pc : S \rightarrow \mathbb{N}$ such that, for every $(v,G)\in S$, $\pc(v,G) \in \{0,\ldots,|\A(v,G)|\}$.
The first key property of such a precoloring states that the values assigned by a measure $\Ce$ to the vertices of a graph $G$ should be respected, i.e., vertices with different centrality values get different colors. This is formalized as follows:

\begin{definition}[\textbf{Non-Uniform $\Ce$-Injectivity}]\label{def:non-uniform-injective}
	Consider a set $S \subseteq \vg$, and a precoloring $\pc : S \rightarrow \mathbb{N}$ of $S$. Given a centrality measure $\Ce$, we say that $\pc$ is {\em non-uniformly $\Ce$-injective} if, for every $(u,G),(v,G) \in S$, $\Ce(u,G) \neq \Ce(v,G)$ implies $\pc(u,G) \neq \pc(v,G)$. \hfill\markfull
\end{definition}

The term non-uniform in the above definition refers to the fact that $\Ce$-injectivity is only enforced inside a certain graph, and not across all the graphs mentioned in $S$, i.e., it might be the case that a non-uniformly $\Ce$-injective precoloring of $S$ assigns to $(u,G),(v,G')$, where $G\neq G'$ and $\Ce(u,G) \neq \Ce(v,G')$, the same color.

The second key property of a precoloring $S$ states that $S$ should be consistent with the induced ranking, not only inside a certain graph, but also among different graphs mentioned in $S$. In other words, if $(u,G)$ comes before $(v,G)$ and $(u',G')$ comes before $(v',G')$, then one of the following should hold: $(u,G)$ and $(v',G')$ get different colors, or $(u',G)$ and $(v,G')$ get different colors. This is formalized as follows:

\begin{definition}[\textbf{$\Ce$-Consistency}]\label{def:rank-consistent}
	Consider a set $S \subseteq \vg$, and a precoloring $\pc : S \rightarrow \mathbb{N}$ of $S$. Given a measure $\Ce$, we say that $\pc$ is {\em $\Ce$-consistent} if, for every $(u,G),(v,G),(u',G'),(v',G') \in S$, the following holds: if $\Ce(u,G) < \Ce(v,G)$ and $\Ce(u',G') < \Ce(v',G')$, then $\pc(u,G) \neq \pc(v',G')$ or $\pc(u',G) \neq \pc(v,G')$. \hfill\markfull
\end{definition}

Putting together the above two properties over precolorings, we get the notion of $\Ce$-colorability of a set $S \subseteq \vg$:

\begin{definition}[\textbf{$\Ce$-Colorability}]\label{def:coloring}
	We say that a set $S \subseteq \vg$ is {\em $\Ce$-colorable}, for some measure $\Ce$, if there exists a precoloring of $S$ that is non-uniformly $\Ce$-injective and $\Ce$-consistent. \hfill\markfull
\end{definition}


\noindent \paragraph{The Characterization.}
Interestingly, $\Ce$-colorability is all we need towards the desired characterization, namely a measure $\Ce$ is subgraph-based relative to the induced ranking iff $\vg$ (i.e., all possible graphs) is $\Ce$-colorable.
We further show that the $\Ce$-colorability of $\vg$ is equivalent to the $\Ce$-colorability of every finite set $S \subsetneq \vg$. The latter, apart from being interesting in its own right, it provides a tool that is more convenient than the $\Ce$-colorability of $\vg$ for classifying measures as subgraph-based relative to the induced ranking.

\begin{theorem}\label{the:characterization-ranking}
	Consider a centrality measure $\Ce$. The following statements are equivalent:
	
	\begin{enumerate}
		\item $\Ce$ is a subgraph-based centrality measure relative to the induced ranking.
		\item Every finite set $S \subsetneq \vg$ is $\Ce$-colorable.
		\item $\vg$ is $\Ce$-colorable.
	\end{enumerate}
\end{theorem}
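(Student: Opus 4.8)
The plan is to prove the three-way equivalence by establishing the cycle $(1)\Rightarrow(3)\Rightarrow(2)\Rightarrow(1)$, where the two ``easy'' directions extract colorings from measures and vice versa, and the genuinely hard direction is the compactness argument linking finite and infinite colorability.

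\textbf{From (1) to (3).} Suppose $\Ce$ is subgraph motif relative to the induced ranking, so $\rank(\Ce) = \rank(\subc{\F,f})$ for some subgraph family $\F$ and filtering function $f$. I would define the precoloring $\pc(v,G) = |\F(v,G)|$. Since $\F(v,G) \subseteq \A(v,G)$, we have $\pc(v,G) \in \{0,\ldots,|\A(v,G)|\}$, so $\pc$ is a legitimate precoloring of $\vg$. Non-uniform $\Ce$-injectivity follows because if $\Ce(u,G) \neq \Ce(v,G)$ then (since the rankings agree, and a strict difference in one ranking forces a strict difference in the other) $f(|\F(u,G)|) \neq f(|\F(v,G)|)$, which forces $|\F(u,G)| \neq |\F(v,G)|$. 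For $\Ce$-consistency, I would argue contrapositively: if $\pc(u,G) = \pc(v',G')$ and $\pc(u',G) = \pc(v,G')$ then, applying $f$, we get $\subc{\F,f}(u,G) = \subc{\F,f}(v',G')$ and $\subc{\F,f}(u',G) = \subc{\F,f}(v,G')$; translating the strict $\Ce$-inequalities through the agreement of rankings into strict $\subc{\F,f}$-inequalities yields a contradiction with these equalities, so consistency holds.

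\textbf{From (3) to (2).} This is immediate: any non-uniformly $\Ce$-injective and $\Ce$-consistent precoloring of $\vg$ restricts to one on any finite $S \subsetneq \vg$, since both properties are universally quantified and hence inherited by subsets.

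\textbf{From (2) to (1).} This is the heart of the argument, and splits into two tasks. First I must turn a coloring of all of $\vg$ into an actual subgraph family and filtering function; second I must produce that global coloring from the assumed finite colorability. For the first task, given a non-uniformly $\Ce$-injective and $\Ce$-consistent precoloring $\pc$ of $\vg$, I would define $\F$ by selecting, for each $(v,G)$, an arbitrary subset of $\A(v,G)$ of size exactly $\pc(v,G)$ (possible since $\pc(v,G) \le |\A(v,G)|$), so that $|\F(v,G)| = \pc(v,G)$. The filtering function $f$ must be built so that $\rank(\subc{\F,f}) = \rank(\Ce)$; the key is that $\Ce$-consistency is precisely the condition guaranteeing that the order on colors induced by $\Ce$ is well-defined and extends to a genuine function $f$ on $\mathbb{N}$ that respects all the required strict inequalities and equalities simultaneously across every graph. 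I would make this precise by defining a relation on the set of used colors and checking, via $\Ce$-consistency, that it contains no contradictory constraints, then extending it to a monotone-compatible $f$ on all of $\mathbb{N}$.

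\textbf{The compactness step.} The main obstacle is obtaining a single precoloring of the infinite set $\vg$ from the hypothesis that every finite subset is colorable; this is where a König's-lemma / compactness argument is needed. The difficulty is that the color of a vertex $(v,G)$ ranges over the finite set $\{0,\ldots,|\A(v,G)|\}$, but the \emph{consistency} constraint couples colors \emph{across} different graphs, so a naive pointwise limit need not preserve $\Ce$-consistency. I would handle this by viewing valid colorings as a subset of the product space $\prod_{(v,G)\in\vg}\{0,\ldots,|\A(v,G)|\}$, which is compact by Tychonoff (each factor being finite and discrete); the set of precolorings of $\vg$ satisfying non-uniform $\Ce$-injectivity and $\Ce$-consistency is closed, being an intersection of basic closed conditions each of which refers to only finitely many coordinates. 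Finite colorability says every basic closed set in this family is nonempty when restricted to the relevant finite coordinates, so the finite-intersection property holds and the full intersection is nonempty, yielding the desired global coloring. Enumerating $\vg$ as a countable set and invoking König's lemma on the tree of consistent partial colorings is an equivalent and perhaps more self-contained route; the care required is in verifying that each constraint is genuinely finitary so that the compactness machinery applies.
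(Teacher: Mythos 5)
Your proposal is correct in substance and follows the same overall architecture as the paper's proof, with two genuine differences worth recording. First, you orient the cycle as $(1)\Rightarrow(3)\Rightarrow(2)\Rightarrow(1)$ rather than the paper's $(1)\Rightarrow(2)\Rightarrow(3)\Rightarrow(1)$: you extract the global precoloring $\pc(v,G)=|\F(v,G)|$ directly (the paper performs the identical computation, only restricted to a finite $S$), which makes $(3)\Rightarrow(2)$ a trivial restriction but relocates all the compactness work into $(2)\Rightarrow(1)$; the mathematical content is unchanged. Second, for that compactness step the paper invokes Rado's Selection Principle, whereas you argue via Tychonoff/finite-intersection-property (or K\"onig's lemma, since $\vg$ is countable) on the product $\prod_{(v,G)\in\vg}\{0,\ldots,|\A(v,G)|\}$; these are equivalent tools---Rado's principle is classically proved by exactly such a compactness argument---and your key observation, that non-uniform $\Ce$-injectivity and $\Ce$-consistency are conjunctions of constraints each touching only two, respectively four, coordinates and hence carve out a closed subset of the compact product, is precisely what makes either route go through. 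The paper's route buys a citable off-the-shelf theorem; yours buys transparency about why finitariness of the constraints is the crux, which you correctly isolate. The one place your sketch is materially thinner than the paper is the passage from the global coloring to the filtering function: the paper defines the relation $R_{\pc}$ on colors, derives irreflexivity from non-uniform $\Ce$-injectivity and asymmetry from $\Ce$-consistency, takes the transitive closure to obtain a strict partial order, extends it to a total order on $\mathbb{N}$ via Szpilrajn's order-extension theorem, and finally embeds that countable total order order-preservingly into a dense countable subset of $\mathbb{R}$ by a back-and-forth argument, which yields $f$. Your phrase ``checking that it contains no contradictory constraints, then extending it to a monotone-compatible $f$'' compresses all of this; note in particular that $\Ce$-consistency as defined directly rules out only two-cycles of $R_{\pc}$, so irreflexivity of the transitive closure (acyclicity) is exactly what ``no contradictory constraints'' must be made to mean---a point the paper itself treats briskly---and the last step is not automatic but rests on the classical fact, proved by back-and-forth, that every countable linear order embeds in $(\mathbb{Q},<)$. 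Spelling out those two lemmas (Szpilrajn and the countable-dense embedding) would bring your $(2)\Rightarrow(1)$ to the same level of rigor as the paper's.
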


To show the above characterization, it suffices to establish the sequence of implications $(1) \Rightarrow (2) \Rightarrow (3) \Rightarrow (1)$.
The implication $(1) \Rightarrow (2)$ is a rather easy one and its full proof is given below. The proofs of the implications $(2) \Rightarrow (3)$ and $(3) \Rightarrow (1)$ are more interesting and we discuss their key ingredients below.

\medskip

\noindent \underline{Implication $(1) \Rightarrow (2)$}

\smallskip

\noindent Since, by hypothesis, $\Ce$ is a subgraph-based measure relative to the induced ranking, there are a subgraph family $\F$ and a filtering function $f$ such that $\rank(\Ce) = \rank(\subc{\F,f})$. Given a finite set $S \subsetneq \vg$, we define the function $\pc_S : S \rightarrow \mathbb{N}$ as follows: for every $(v,G) \in S$, $\pc_S(v,G) = |\F(v,G)|$. It is clear that $\pc_S$ is a precoloring of $S$ since, by definition, $\F(v,G) \subseteq \A(v,G)$, and thus, $\pc_S(v,G) \in \{0,\ldots,|\A(v,G)|\}$. It remains to show that $\pc_S$ is non-uniformly $\Ce$-injective and $\Ce$-consistent, which in turn implies that $S$ is $\Ce$-colorable:

\begin{description}
	\item[Non-uniformly $\Ce$-injective.] Since $\rank(\Ce) = \rank(\Ce\langle\F,f\rangle)$, for every $(u_1, G),(u_2,G) \in S$, it holds that $\Ce(u_1, G) \neq \Ce(u_2,G)$ iff $\subc{\F,f}(u_1, G)\not=\subc{\F,f}(u_2, G)$. Therefore, $\pc_S(u_1,G) = |\F(u_1,G)| \neq |\F(u_2,G)| = \pc_S(u_2,G)$, and the claim follows. 
	
	\item[$\Ce$-consistent.] By contradiction, assume that there are $(v_1,G_1),(v_2,G_1),(u_1,G_2)$ and $(u_2,G_2)$ such that $\Ce(v_1,G_1)<\Ce(v_2,G_1)$ and $\Ce(u_1,G_2) >\Ce(u_2,G_2)$ but $\pc_S(u_1,G_2) = \pc_S(v_1,G_1)$ and $\pc_S(v_2,G_1) = \pc_S(u_2,G_2)$. Therefore, $|\F(v_1,G_1)| = |\F(u_1,G_2)|$ and $|\F(v_2,G_1)| = |\F(u_2,G_2)|$. Consequently, using the fact that $\rank(\Ce) = \rank(\Ce\langle\F,f\rangle)$, $\subc{\F,f}(v_1,G_1) < \subc{\F,f}(v_2,G_1) = \subc{\F,f}(u_2,G_2)<\subc{\F,f}(u_1,G_2)=\subc{\F,f}(v_1,G_1)$, which is clearly a contradiction, and the claim follows.
\end{description}

\medskip

\noindent \underline{Implication $(2) \Rightarrow (3)$}

\smallskip

\noindent The proof of this implication heavily relies on an old result that goes back in 1949 by Rado~\cite{rado49} known as {\em Rado's Selection Principle}.
We write $\mathcal{P}_{\mathsf{fin}}(A)$ for the finite powerset of a set $A$, i.e., the set that collects all the {\em finite} subsets of $A$. Furthermore, given a function $f : A \rightarrow B$, we write $f_{|C}$ for the restriction of $f$ to $C \subseteq A$.

\begin{theorem}[Rado's Selection Principle]\label{the:rados}
	Let $A$ and $B$ be arbitrary sets. Assume that, for each $C \in \mathcal{P}_{\mathsf{fin}}(A)$, $f_C$ is a function $C \rightarrow B$ (a so-called ``local function''). Assume further that, for every $x \in A$, the set $\{f_C(x) \mid C \in \mathcal{P}_{\mathsf{fin}}(A) \text{ and } x \in C\}$ is finite. Then, there is a function $f : A \rightarrow B$ (a so-called ``global function'') such that, for every $C \in \mathcal{P}_{\mathsf{fin}}(A)$, there is $D \in \mathcal{P}_{\mathsf{fin}}(A)$ with $C \subsetneq D$ and $f_{|C} = {f_{D}}_{|C}$.
\end{theorem}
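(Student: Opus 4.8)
The plan is to prove Rado's Selection Principle by a compactness argument: I realize the desired global function as a point of a suitable compact product space, and read off the conclusion from the finite intersection property. First I would use the finiteness hypothesis to build the space. For each $x \in A$, set $B_x = \{f_C(x) \mid C \in \mathcal{P}_{\mathsf{fin}}(A) \text{ and } x \in C\}$, which by assumption is a \emph{finite} subset of $B$, and is nonempty since $f_{\{x\}}$ already witnesses a value. Give each $B_x$ the discrete topology and form $P = \prod_{x \in A} B_x$ with the product topology. As each $B_x$ is finite, hence compact, Tychonoff's theorem makes $P$ compact, and its points are exactly the candidate global functions $g : A \to B$ with $g(x) \in B_x$ for every $x$. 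This is the crucial use of the finiteness assumption: it is precisely what yields compactness.

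Next I would encode the conclusion as a family of closed sets. For each $C \in \mathcal{P}_{\mathsf{fin}}(A)$ put $X_C = \{ {f_D}_{|C} \mid D \in \mathcal{P}_{\mathsf{fin}}(A) \text{ and } C \subsetneq D \}$, the set of restrictions to $C$ of local functions living on proper finite supersets of $C$. Since $C$ and each $B_x$ are finite, $X_C$ is a finite subset of $\prod_{x \in C} B_x$, and it is nonempty in the intended regime where $A$ is infinite (as in the application $A = \vg$), so that every finite $C$ has a proper finite superset. Let $\widehat{X}_C = \{ g \in P \mid g_{|C} \in X_C \}$. Because $\widehat{X}_C$ constrains only the finitely many coordinates in $C$ and $X_C$ is a clopen subset of the finite factor $\prod_{x \in C} B_x$, each $\widehat{X}_C$ is clopen, in particular closed. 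A point $f \in \bigcap_{C \in \mathcal{P}_{\mathsf{fin}}(A)} \widehat{X}_C$ is exactly a global function as required: for every finite $C$ we get $f_{|C} \in X_C$, i.e. $f_{|C} = {f_D}_{|C}$ for some finite $D \supsetneq C$.

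The crux is to verify the finite intersection property for $\{\widehat{X}_C\}$, after which compactness of $P$ forces the intersection to be nonempty and finishes the proof. Given finitely many $C_1, \ldots, C_n \in \mathcal{P}_{\mathsf{fin}}(A)$, set $C = C_1 \cup \cdots \cup C_n$ and pick any finite $D$ with $C \subsetneq D$. Define $g \in P$ by $g(x) = f_D(x)$ for $x \in D$ (note $f_D(x) \in B_x$) and $g(x)$ an arbitrary element of $B_x$ for $x \notin D$. For each $i$ we have $C_i \subseteq C \subsetneq D$, hence $D \supsetneq C_i$ and $g_{|C_i} = {f_D}_{|C_i} \in X_{C_i}$, so $g \in \widehat{X}_{C_i}$; therefore $g \in \bigcap_{i=1}^{n} \widehat{X}_{C_i} \neq \emptyset$.

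I expect the main obstacle to be conceptual rather than computational: recognizing that the finiteness hypothesis is exactly the compactness ingredient supplied by Tychonoff, and packaging the conclusion as an intersection of closed sets so that the finite intersection property reduces to the elementary ``take the union and one proper superset'' observation above. The only mild technicality is the properness requirement $C \subsetneq D$, which is harmless once one notes that every finite subset of an infinite $A$ admits a proper finite superset. If one prefers to avoid topology, an essentially equivalent choice-based argument works: fix an ultrafilter $\mathcal{U}$ on $\mathcal{P}_{\mathsf{fin}}(A)$ containing every tail $\{ D \mid C \subsetneq D \}$ (these have the finite intersection property when $A$ is infinite); for each $x$, since $B_x$ is finite the sets $\{ D \ni x \mid f_D(x) = b \}$ for $b \in B_x$ partition a $\mathcal{U}$-large set, so exactly one lies in $\mathcal{U}$, and declaring $f(x)$ to be that unique $b$ gives the global function, because for any finite $C$ the intersection over $x \in C$ of the corresponding $\mathcal{U}$-large sets, intersected with a tail, is $\mathcal{U}$-large and hence furnishes the required $D$.
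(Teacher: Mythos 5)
Your proof is correct. Note first that the paper does not actually prove Theorem~\ref{the:rados}: it invokes it as a classical result of Rado and points to~\cite{mirsky71} for proofs, so there is no in-paper argument to compare against; your write-up supplies a proof the paper omits. The route you take is one of the standard ones: realize the candidate global functions as points of the product $\prod_{x\in A} B_x$, where the finiteness hypothesis is exactly what makes each factor compact and hence (Tychonoff) the product compact; encode the conclusion by the sets $\widehat{X}_C$, which are indeed clopen since they constrain only the finitely many coordinates in $C$; and reduce everything to the finite intersection property via the union-plus-one-proper-superset observation. The FIP step is sound (for each $i$ one has $C_i \subseteq C \subsetneq D$, so $D$ is a proper superset of $C_i$ and $g_{|C_i} = {f_D}_{|C_i} \in X_{C_i}$), as is the check that ${f_D}_{|C}$ lands in $\prod_{x\in C} B_x$. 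The ultrafilter variant you sketch is essentially the same proof with Tychonoff replaced by the ultrafilter lemma, and it is also correct. One point you flag deserves emphasis: with the strict inclusion $C \subsetneq D$ demanded in the statement, the theorem is literally false for finite nonempty $A$ (take $C = A$; no proper finite superset $D \subseteq A$ exists), so your standing assumption that $A$ is infinite is necessary, not merely convenient. Since the paper only applies the theorem with $A = \vg$, which is infinite, this is harmless for the paper, but it is a small imprecision in the statement rather than a gap in your argument.
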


Several proofs and applications of Rado's Theorem can be found in~\cite{mirsky71}. We proceed to discuss how it is used to prove $(2) \Rightarrow (3)$.
By hypothesis, for each $S \in \mathcal{P}_{\mathsf{fin}}(\vg)$, there exists a precoloring of $S$, i.e., a function $\pc_S : S \rightarrow \mathbb{N}$ that is non-uniformly $\Ce$-injective and $\Ce$-consistent. Since, for every $(v,G) \in \vg$, $\A(v,G)$ is finite, we can conclude that the following holds: for every $(v,G) \in \vg$, the set $\{\pc_S(v,G) \mid S \in \mathcal{P}_{\mathsf{fin}}(\vg) \text{ and } (v,G) \in S\}$ is finite. This allows us to apply Theorem~\ref{the:rados} with $A = \vg$ and $B = \mathbb{N}$. Therefore, there exists a function $f : \vg \rightarrow \mathbb{N}$ such that, for every $S \in \mathcal{P}_{\mathsf{fin}}(\vg)$, there exists $S' \in \mathcal{P}_{\mathsf{fin}}(\vg)$ with $S \subsetneq S'$ and $f_{|S} = {\pc_{S'}}_{|S}$. Interestingly, by exploiting the latter property of the function $f$ guaranteed by Theorem~\ref{the:rados}, and the fact that, for each $S \in \mathcal{P}_{\mathsf{fin}}(\vg)$, $\pc_S$ is a precoloring of $S$ that is non-uniformly $\Ce$-injective and $\Ce$-consistent, it is not difficult to show that $f$ is a precoloring of $\vg$ that is non-uniformly $\Ce$-injective and $\Ce$-consistent, and item (3) follows.

\medskip

\noindent \underline{Implication $(3) \Rightarrow (1)$}

\smallskip

\noindent We finally discuss the proof of the last implication. The goal is to devise a subgraph family $\F$ and a filtering function $f$ such that $\rank(\Ce) = \rank(\subc{\F,f})$, which in turn proves item $(1)$. By hypothesis, there exists a precoloring $\pc$ of $\vg$ that is non-uniformly $\Ce$-injective and $\Ce$-consistent. We define $\F$ in such way that, for every $(v,G) \in \vg$, $|\F(v,G)| = \pc(v,G)$; note that such a subgraph family exists since $\pc(v,G) \in \{0,\ldots,|\A(v,G)|\}$.
Now, defining the filtering function $f$ is a non-trivial task. Let $R_{\pc}$ be the relation
\begin{multline*}
\left\{(i,j) \in \mathbb{N} \times \mathbb{N} \mid \text{ there are } (u,G), (v,G) \text{ in } \vg \text{ such that}\right.\\ \left.\Ce(u,G) < \Ce(v,G), \pc(u,G) = i, \text{ and } \pc(v,G) = j\right\}.
\end{multline*}
The fact that $\pc$ is non-uniformly $\Ce$-injective allows us to conclude that $R_\pc$ is irreflexive. Moreover, the $\Ce$-consistency of $\pc$ implies that $R_\pc$ is asymmetric. Observe now that if we extend $R_\pc$ into a total order $R_{\pc}^{\star}$ over $\mathbb{N}$, and then show that $R_{\pc}^{\star}$ can be embedded into a carefully chosen countable subset $N$ of $\mathbb{R}$, then we obtain the desired filtering function $f$, which assigns real numbers to the sizes of the subgraph families assigned to the pairs of $\vg$ by $\F$ as dictated by the embedding of $R_{\pc}^{\star}$ into $N \subsetneq \mathbb{R}$.
Let us now briefly discuss how this is done. The binary relation $R_\pc$ is first extended into the strict partial order $R_{\pc}^{+}$ by simply taking its transitive closure. Now, the fact that $R_{\pc}^{+}$ can be extended into a total order $R_{\pc}^{\star}$ over $\mathbb{N}$ follows by the {\em order-extension principle} (a.k.a.~{\em Szpilrajn Extension Theorem}), shown by Szpilrajn in 1930~\cite{szpi30}, which essentially states that every partial order can be extended into a total order.
Finally, the fact that $R_{\pc}^{\star}$ can be embedded into $N \subsetneq \mathbb{R}$ is shown via the {\em back-and-forth method}, a technique for showing isomorphism between countably infinite structures satisfying certain conditions.

\medskip

\noindent \paragraph{A Bounded-Value-Like Property.} An interesting question is whether we can isolate a property in the spirit of the bounded value property (see Definition~\ref{def:bvp}) that can characterize subgraph-based measures relative to the induced ranking. Despite our efforts, we have not managed to provide an answer to this question. On the other hand, we succeeded in isolating a bounded-value-like property that is a necessary condition for a measure being subgraph-based relative to the induced ranking.
It is clear that the bounded value property is not enough towards a necessary condition since, as discussed in Section~\ref{sec:family}, there is a measure (see the one devised in the proof of Proposition~\ref{pro:no-subc-measure}) that is not subgraph-based, which means that it does not enjoy the bounded value property, but it is subgraph-based relative to the induced ranking. 
On the other hand, to our surprise, a {\em non-uniform} version of the bounded value property leads to the desired necessary condition. Let us make this more precise.
The ranking induced by a measure $\Ce$ compares only the values of vertices of the same graph; a pair $((u,G),(v,G'))$, where $G \neq G'$, will never appear in $\rank(\Ce)$.
This led us to conjecture that for characterizing subgraph-based measures relative to the induced ranking, it suffices to bound the number of different values that can be assigned among vertices {\em inside the same graph} that are surrounded by a limited number of connected subgraphs.
This leads to the non-uniform version of the bounded value property:

\begin{definition}[\textbf{Non-Uniform Bounded Value Property}]\label{def:non-uniform-bvp}
	A measure $\Ce$ enjoys the {\em non-uniform bounded value property} if, for every $n>0$ and $G \in \gr$, $|\val{G}{n}(\Ce)| \leq n+1$. \hfill\markfull
\end{definition}

We can then show the following implication:

\begin{proposition}\label{pro:coloring-to-non-uniform-bvp}
	Consider a centrality measure $\Ce$. If there exists a precoloring of $\vg$ that is non-uniformly $\Ce$-injective, then $\Ce$ enjoys the non-uniform bounded value property.
\end{proposition}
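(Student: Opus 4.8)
The plan is to show that a single non-uniformly $\Ce$-injective precoloring of $\vg$ already pins down, inside every fixed graph, how many centrality values can be attained by vertices surrounded by few connected subgraphs. So I would fix an arbitrary integer $n>0$ and graph $G \in \gr$, take the precoloring $\pc : \vg \rightarrow \mathbb{N}$ guaranteed by the hypothesis, and restrict attention to the set of vertices $W = \{v \in V(G) \mid |\A(v,G)| \leq n\}$. By definition $\val{G}{n}(\Ce) = \{\Ce(v,G) \mid v \in W\}$, so the whole task reduces to bounding the number of distinct centrality values attained over $W$ by $n+1$.

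The argument then splits into two simple observations. First, I would bound the colors available to the vertices of $W$: since $\pc$ is a precoloring, $\pc(v,G) \in \{0,\ldots,|\A(v,G)|\}$ for every $v \in V(G)$, and for $v \in W$ the constraint $|\A(v,G)| \leq n$ forces $\pc(v,G) \in \{0,\ldots,n\}$, a set of exactly $n+1$ integers; hence $\pc(\cdot,G)$ uses at most $n+1$ distinct colors on $W$. Second, I would use non-uniform $\Ce$-injectivity in its contrapositive form: since $\Ce(u,G) \neq \Ce(v,G)$ implies $\pc(u,G) \neq \pc(v,G)$, we get that $\pc(u,G) = \pc(v,G)$ implies $\Ce(u,G) = \Ce(v,G)$. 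Thus, within the fixed graph $G$, the map $v \mapsto \Ce(v,G)$ factors through $v \mapsto \pc(v,G)$ on $W$, so the number of distinct values of $\Ce(\cdot,G)$ over $W$ cannot exceed the number of distinct colors over $W$. Combining the two observations yields $|\val{G}{n}(\Ce)| \leq n+1$, and since $n$ and $G$ were arbitrary, $\Ce$ enjoys the non-uniform bounded value property.

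Since this is purely a pigeonhole/factoring count, I do not expect a genuine obstacle; the only point demanding a little care is the book-keeping that $|\A(v,G)| \leq n$ confines the colors to the $(n+1)$-element set $\{0,\ldots,n\}$ (and not, say, an $n$-element set), which is exactly what produces the matching bound $n+1$. It is worth noting that neither $\Ce$-consistency nor the global structure of $\vg$ is needed here: a single non-uniformly $\Ce$-injective precoloring of $\vg$, read off separately on each graph, already suffices, which is why the implication holds in this one direction without any converse coloring hypothesis.
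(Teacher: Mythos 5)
Your proof is correct and takes essentially the same route as the paper: the paper also fixes $G$ and $n$, forms the set of colors $S_n = \{\pc(v,G) \mid |\A(v,G)| \leq n\}$ (your $\pc(W)$), bounds $|\val{G}{n}(\Ce)| \leq |S_n|$ via the contrapositive of non-uniform $\Ce$-injectivity, and concludes with $S_n \subseteq \{0,\ldots,n\}$, hence $|S_n| \leq n+1$. Your closing observation that $\Ce$-consistency plays no role is likewise consistent with the paper's proof, which uses only non-uniform $\Ce$-injectivity.
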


\begin{proof}
	Let $pc$ be the non-uniform $\Ce$-injective precoloring of $\vg$, which exists by hypothesis.
	Consider an arbitrary graph $G$ and an integer $n>0$. We define the set
	\[
	S_n\ =\ \{pc(v,G)\mid |\A(v,G)|\leq n\}. 
	\]
	In simple words, $S_n$ collects all the colors assigned by $pc$ to vertices with at most $n$ connected subgraphs surrounding them. We then have that $|\val{G}{n}(\Ce)|\leq |S_n|$ since $pc$ is non-uniformly $\Ce$-injective. 
	Since $pc$ is a precoloring, $S_n \subseteq \{0,\ldots,n\}$, and thus, $|S_n|\leq n+1$. This in turn implies that $|\val{G}{n}(\Ce)|\leq n+1$, and the claim follows.
\end{proof}

 By combining Theorem~\ref{the:characterization-ranking} and Proposition~\ref{pro:coloring-to-non-uniform-bvp}, we get the following corollary, which states that the non-uniform bounded value property leads to the desired necessary condition:

 \begin{corollary}\label{cor:necessary-condition-ranking}
 	If a centrality measure is a subgraph-based measure relative to the induced ranking, then it enjoys the non-uniform bounded value property.
 \end{corollary}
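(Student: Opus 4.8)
The plan is to argue directly by a counting (pigeonhole) argument, exploiting the fact that the precoloring constraint confines the available colors of ``low-subgraph'' vertices to a set of size exactly $n+1$. First I would fix an arbitrary integer $n>0$ and a graph $G \in \gr$, and let $\pc$ be the assumed non-uniformly $\Ce$-injective precoloring of $\vg$. Write $W = \{v \in V(G) \mid |\A(v,G)| \leq n\}$ for the set of vertices of $G$ surrounded by at most $n$ connected subgraphs, so that $\val{G}{n}(\Ce) = \{\Ce(v,G) \mid v \in W\}$ by definition. If $W = \emptyset$ the bound is trivial, so assume otherwise.

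The central observation concerns the range of colors on $W$. Since $\pc$ is a precoloring, $\pc(v,G) \in \{0,\ldots,|\A(v,G)|\}$ for each $(v,G) \in \vg$; and since $v \in W$ means $|\A(v,G)| \leq n$, it follows that $\pc(v,G) \in \{0,1,\ldots,n\}$ for every $v \in W$. Thus the colors available to vertices in $W$ form a set of size $n+1$. This is precisely what pins down the ``$+1$'' in the property.

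Next I would build an injection from $\val{G}{n}(\Ce)$ into $\{0,1,\ldots,n\}$. For each value $r \in \val{G}{n}(\Ce)$, choose a representative vertex $v_r \in W$ with $\Ce(v_r,G) = r$, and map $r \mapsto \pc(v_r,G)$. This map is well-defined and injective: if $r \neq r'$, then $\Ce(v_r,G) = r \neq r' = \Ce(v_{r'},G)$, so non-uniform $\Ce$-injectivity, applied inside the single graph $G$, yields $\pc(v_r,G) \neq \pc(v_{r'},G)$. Since this injection lands in a set of size $n+1$, we obtain $|\val{G}{n}(\Ce)| \leq n+1$; as $n$ and $G$ were arbitrary, $\Ce$ enjoys the non-uniform bounded value property.

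I expect no genuine obstacle here, as the argument is essentially pigeonhole. The only point requiring care is the range computation in the second step: recognizing that the precoloring bound $\pc(v,G) \leq |\A(v,G)|$ together with $|\A(v,G)| \leq n$ confines the colors of exactly the relevant vertices to $\{0,\ldots,n\}$, which is what makes $n+1$ (rather than some larger quantity) the correct bound. Everything else is a direct appeal to the injectivity hypothesis within a fixed graph.
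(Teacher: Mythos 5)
Your counting argument is correct, and it is essentially the paper's own proof of Proposition~\ref{pro:coloring-to-non-uniform-bvp}: the paper bounds $|\val{G}{n}(\Ce)|$ by the size of the color set $S_n = \{\pc(v,G) \mid |\A(v,G)| \leq n\} \subseteq \{0,\ldots,n\}$, which is the same injection-into-$n+1$-colors that you describe via representatives. The issue is what you start from. The corollary's hypothesis is that $\Ce$ is a subgraph motif measure \emph{relative to the induced ranking}, i.e., that $\rank(\Ce) = \rank(\subc{\F,f})$ for some subgraph family $\F$ and filtering function $f$. It does not hand you a non-uniformly $\Ce$-injective precoloring of $\vg$; your opening move (``let $\pc$ be the assumed non-uniformly $\Ce$-injective precoloring of $\vg$'') assumes precisely the hypothesis of Proposition~\ref{pro:coloring-to-non-uniform-bvp}, so as written you have proved that proposition rather than the corollary, and the actual hypothesis of the corollary is never used.

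The gap is real but easily closed, in either of two ways. The paper's route is to combine Proposition~\ref{pro:coloring-to-non-uniform-bvp} with Theorem~\ref{the:characterization-ranking} (implication $(1) \Rightarrow (3)$), which guarantees that $\vg$ is $\Ce$-colorable and hence supplies the precoloring you assumed; note this detour passes through Rado's Selection Principle. Alternatively, you can bridge directly and more cheaply: define $\pc(v,G) = |\F(v,G)|$ for every $(v,G) \in \vg$. Since $\F(v,G) \subseteq \A(v,G)$, this is a precoloring, and it is non-uniformly $\Ce$-injective because $\Ce(u,G) \neq \Ce(v,G)$ together with $\rank(\Ce) = \rank(\subc{\F,f})$ forces $f(|\F(u,G)|) \neq f(|\F(v,G)|)$, hence $\pc(u,G) = |\F(u,G)| \neq |\F(v,G)| = \pc(v,G)$. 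With either bridge in place, the rest of your argument goes through verbatim and matches the paper's.
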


The question whether the non-uniform bounded value property is also a sufficient condition is negatively settled by the next result:

\begin{proposition}\label{pro:non-uniform-bvp-not-sufficient}
	There exists a centrality measure that is not a subgraph-based measure relative to the induced ranking, but it enjoys the non-uniform bounded value property.
\end{proposition}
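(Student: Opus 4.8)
The plan is to invoke Theorem~\ref{the:characterization-ranking}: to prove that a measure $\Ce$ is \emph{not} a subgraph motif measure relative to the induced ranking, it suffices to construct a $\Ce$ for which $\vg$ is not $\Ce$-colorable (negating statement~(3)), while making sure $\Ce$ still enjoys the non-uniform bounded value property. Since Proposition~\ref{pro:coloring-to-non-uniform-bvp} tells us that the non-uniform bounded value property follows already from the mere existence of a non-uniformly $\Ce$-injective precoloring, the measure I build must admit non-uniformly $\Ce$-injective precolorings yet admit \emph{no} precoloring that is simultaneously non-uniformly $\Ce$-injective \emph{and} $\Ce$-consistent. In other words, the obstruction has to be located entirely in $\Ce$-consistency. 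Recalling the discussion of implication $(3)\Rightarrow(1)$, where $\Ce$-consistency of a precoloring $\pc$ is shown to force the ``color-order'' relation $R_{\pc}$ (with $i \mathrel{R_{\pc}} j$ whenever some graph carries a lower-$\Ce$-valued vertex colored $i$ and a higher-$\Ce$-valued vertex colored $j$) to be asymmetric, my goal is to design a measure for which every non-uniformly $\Ce$-injective precoloring is \emph{forced} to place $R_{\pc}$ into a $2$-cycle.

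The construction uses two gadget graphs. Let $G_1$ be the disjoint union of two isolated vertices $a_1,b_1$ and a single edge $\{c_1,d_1\}$, and let $G_2$ be the analogous graph on fresh vertices $a_2,b_2,c_2,d_2$. In $G_1$ the isolated vertices satisfy $|\A(a_1,G_1)| = |\A(b_1,G_1)| = 1$, so their only available colors are $\{0,1\}$, whereas the edge endpoints satisfy $|\A(c_1,G_1)| = |\A(d_1,G_1)| = 2$, giving them the palette $\{0,1,2\}$; the same holds in $G_2$. I set $\Ce(a_1,G_1)=1$, $\Ce(b_1,G_1)=2$, $\Ce(c_1,G_1)=\Ce(d_1,G_1)=3$, and the mirror-image values $\Ce(a_2,G_2)=3$, $\Ce(b_2,G_2)=2$, $\Ce(c_2,G_2)=\Ce(d_2,G_2)=1$, and declare $\Ce$ constant $0$ on every other pair of $\vg$. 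The reason for assigning $c_1$ and $d_1$ the \emph{same} value is that $G_1$ then carries exactly three distinct values while still forcing a color: since $a_1,b_1,c_1$ have pairwise distinct values, non-uniform $\Ce$-injectivity forces them pairwise distinct colors, and as $a_1,b_1$ exhaust $\{0,1\}$, the vertex $c_1$ is forced to receive color $2$ (and then $d_1$ may also take $2$). The symmetric argument forces $c_2$ to color $2$ in $G_2$.

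Next I would read off the forced entries of $R_{\pc}$. In $G_1$, both $a_1$ and $b_1$ have $\Ce$-value strictly below that of $c_1$, and $\{\pc(a_1,G_1),\pc(b_1,G_1)\}=\{0,1\}$ while $\pc(c_1,G_1)=2$; hence $0 \mathrel{R_{\pc}} 2$ and $1 \mathrel{R_{\pc}} 2$ are unavoidable. In $G_2$, $c_2$ has $\Ce$-value strictly below both $a_2$ and $b_2$, and $\pc(c_2,G_2)=2$ while $\{\pc(a_2,G_2),\pc(b_2,G_2)\}=\{0,1\}$; hence $2 \mathrel{R_{\pc}} 0$ and $2 \mathrel{R_{\pc}} 1$ are unavoidable. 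Since $0 \mathrel{R_{\pc}} 2$ and $2 \mathrel{R_{\pc}} 0$ both hold, $R_{\pc}$ fails to be asymmetric, which I would make concrete by exhibiting the quadruple of pairs from $G_1$ and $G_2$ that violates Definition~\ref{def:rank-consistent}. As $\Ce$-consistency implies asymmetry of $R_{\pc}$, no precoloring of $\vg$ can be both non-uniformly $\Ce$-injective and $\Ce$-consistent; thus $\vg$ is not $\Ce$-colorable, and Theorem~\ref{the:characterization-ranking} yields that $\Ce$ is not subgraph motif relative to the induced ranking.

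It then remains to verify that $\Ce$ enjoys the non-uniform bounded value property, which is a direct computation: on every graph other than $G_1,G_2$ there is a single value, and on $G_i$ the vertices with $|\A|\le 1$ are exactly the two isolated ones, so $|\val{G_i}{1}(\Ce)| = 2 \le 2$, while all four vertices have $|\A|\le 2$ and realize exactly three distinct values, so $|\val{G_i}{2}(\Ce)| = 3 \le 3$; larger $n$ is immediate. The main obstacle—and the part needing the most care—is the tension intrinsic to the gadget: forcing a color requires tight palettes \emph{together with} distinct values, whereas the non-uniform bounded value property caps the number of distinct values per graph. The resolution is precisely the decision to let the second edge endpoint ($d_1$, resp.\ $d_2$) repeat an already-used value, which keeps each gadget within the bound $|\val{G_i}{2}(\Ce)|\le 3$ while leaving the forcing of color $2$ intact; checking that this repeated-value vertex can always be colored consistently ($\pc(d_i)=2$), and that the $R_{\pc}$-cycle therefore genuinely survives, is the delicate bookkeeping I would carry out in full.
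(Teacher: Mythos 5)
Your proof is correct, but it takes a genuinely different route from the paper's own. The paper builds gadgets of the same flavour (isolated vertices plus a single edge), but it uses \emph{three} 3-vertex graphs $G_1,G_2,G_3$, each with one isolated vertex, whose values $1,2,3$ are cyclically shifted so that the palette-restricted isolated vertex occupies rank one, three, and two respectively; it then asserts, only in sketch, that ``any subgraph motif measure will need too many values in $\val{}{n}(\Ce)$'' to mimic $\rank(\Ce)$, and verifies the non-uniform bounded value property directly. You instead use two mirror-image 4-vertex gadgets with \emph{two} isolated vertices each and a deliberately duplicated extreme value, and you route the negative direction through the coloring characterization (Theorem~\ref{the:characterization-ranking}, $(1)\Leftrightarrow(3)$): since $a_i,b_i$ have palette $\{0,1\}$ and carry distinct values, they exhaust $\{0,1\}$, so non-uniform $\Ce$-injectivity forces color $2$ on $c_1$ and on $c_2$, making $(0,2)$, $(1,2)$, $(2,0)$, $(2,1)$ unavoidable entries of $R_{\pc}$ and yielding an explicit quadruple violating Definition~\ref{def:rank-consistent}. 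This buys a fully forced coloring --- the residual swap freedom between $a_i$ and $b_i$ is immaterial because \emph{both} colors $0$ and $1$ lie below (resp.\ above) color $2$ --- and hence a complete, self-contained contradiction, whereas the paper's three-graph variant, if spelled out in the same coloring language, additionally needs the observation that pairwise $\Ce$-consistent chains on $\{0,1,2\}$ across graphs must coincide, which is why it requires a third gadget. Two small remarks: your closing worry about whether $d_i$ ``can always be colored consistently'' is not load-bearing --- the contradiction already lives entirely on $c_1$ and $c_2$, and $d_i$'s color $2$ (itself forced by injectivity, as $\Ce(d_i,G_i)$ differs from both isolated values) plays no role in it; and your verification of the non-uniform bounded value property is already complete as written ($|\val{G_i}{1}(\Ce)|=2$, $|\val{G_i}{2}(\Ce)|=3$, a single value on all other graphs), so nothing is actually left to ``carry out in full.''
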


Let us stress that Corollary~\ref{cor:necessary-condition-ranking} equips us with a convenient tool for showing that a measure $\Ce$ is not a subgraph-based measure relative to the induced ranking: it suffices to show that there is $n>0$ and a graph $G$ such that $|\val{G}{n}(\Ce)| > n+1$. In the case of  closeness, we can show that there exists a graph $G$ such that $|\val{G}{5}(\closeness)| > 6$, which in turn implies that:

\begin{proposition}\label{pro:closeness-ranking}
	$\closeness$ is not a subgraph-based measure relative to the induced ranking.
\end{proposition}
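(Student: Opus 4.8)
The plan is to prove the statement by contraposition through Corollary~\ref{cor:necessary-condition-ranking}: since every measure that is subgraph motif relative to the induced ranking enjoys the non-uniform bounded value property, it suffices to show that $\closeness$ \emph{violates} that property, i.e.\ to exhibit an integer $n>0$ and a graph $G$ with $|\val{G}{n}(\closeness)| > n+1$. Following the hint in the text, I would take $n = 5$ and construct a single graph in which at least seven vertices $v$ satisfy $|\A(v,G)| \le 5$ while receiving pairwise distinct closeness scores, so that $|\val{G}{5}(\closeness)| \ge 7 > 6 = n+1$.

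I would build $G$ as a disjoint union of small components, which lets me analyse each one independently, since both $|\A(v,\cdot)|$ and $\closeness(v,\cdot)$ depend only on the component of $v$. Concretely, take an isolated vertex, a single edge, a path $P_3$ on three vertices, a path $P_4$ on four vertices, the star $K_{1,3}$, and a path $P_5$ on five vertices. I would then single out seven vertices together with their data: the isolated vertex ($|\A|=1$, $\closeness=0$); an endpoint of the edge ($|\A|=2$, $\closeness=1$); the centre of $P_3$ ($|\A|=4$, $\closeness=\tfrac12$) and an endpoint of $P_3$ ($|\A|=3$, $\closeness=\tfrac13$); an endpoint of $P_4$ ($|\A|=4$, $\closeness=\tfrac16$); a leaf of $K_{1,3}$ ($|\A|=5$, $\closeness=\tfrac15$); and an endpoint of $P_5$ ($|\A|=5$, $\closeness=\tfrac1{10}$). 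All seven satisfy $|\A|\le 5$, and the scores $0,1,\tfrac12,\tfrac13,\tfrac15,\tfrac16,\tfrac1{10}$ are pairwise distinct, giving $|\val{G}{5}(\closeness)| \ge 7 > 6$, whence the claim follows from Corollary~\ref{cor:necessary-condition-ranking}. The routine part of the write-up is verifying these two kinds of data: each closeness score is the reciprocal of the sum of distances inside the relevant component, and each count $|\A(v,\cdot)|$ is the number of connected subgraphs through $v$ (for a path endpoint, exactly the initial sub-paths).

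The genuinely delicate point — and the main obstacle — is to be sure that seven distinct values really are attainable under the stringent constraint $|\A(v,G)| \le 5$, i.e.\ that the construction is not accidentally over-optimistic. A clean way to see the constraint is tight is to observe that $|\A(v,H)| \ge |V(H)|$ for the component $H$ of $v$: one can build a chain $\{v\}=S_0 \subsetneq S_1 \subsetneq \cdots \subsetneq V(H)$ of connected vertex sets by adding one vertex at a time while preserving connectivity, yielding $|V(H)|$ distinct connected subgraphs through $v$. Hence every qualifying vertex lies in a component of at most five vertices, and enumerating the graphs on at most five vertices shows that the connected components realise only the six distance-sums $\{1,2,3,5,6,10\}$. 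In particular, components containing a cycle contribute nothing (a triangle vertex already lies in seven connected subgraphs, so $|\A|\ge 7$ there), and the distance-sum $4$ is never realised by a qualifying vertex — the vertices whose distance-sum equals $4$, such as the interior vertex of $P_4$, all have $|\A|=6$. Six values is therefore the ceiling attainable from connected components alone, so the seventh value must come from the single-vertex graph, where the convention $\closeness(v,G_v)=0$ (the natural score of an isolated vertex, whose distance-sum is $0$) supplies the decisive extra value. I would make this tightness argument explicit, since it both justifies the particular choice of components and explains why $n=5$ is the smallest threshold for which the construction works.
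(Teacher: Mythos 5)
Your proposal is correct and follows essentially the same route as the paper: the paper likewise reduces to the necessary condition $|\val{G}{5}(\closeness)| > 6$ (via Corollary~\ref{cor:necessary-condition-ranking}, equivalently item (3) of Theorem~\ref{the:characterization-ranking}), and defines $G$ as the disjoint union of components isomorphic to exactly your graphs $L_1,\ldots,L_5,S_4$, reusing the seven witness vertices from the proof of Proposition~\ref{pro:closeness} together with locality and closure under isomorphism. Your explicit tabulation of the closeness values and subgraph counts, and your tightness analysis showing six is the ceiling from nontrivial components (with the convention $\closeness(v,G_v)=0$ supplying the seventh value, a convention the paper makes explicit only for digraphs), simply spell out what the paper leaves implicit.
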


More inexpressibility results of the above form concerning established centrality measures are presented and discussed in Section~\ref{sec:classification}.

\medskip

\noindent
\paragraph{Connected Graphs.} 
The proof of Proposition~\ref{pro:closeness} establishes that 
$\closeness$ is not a subgraph-based measure even if we concentrate on connected graphs.
On the other hand, the proof of Proposition~\ref{pro:closeness-ranking} heavily relies on the fact that the employed graphs are not connected.
This observation led us ask ourselves whether $\closeness$ is a subgraph-based measure relative to the induced ranking if we consider only connected graphs.
It turned out that, for connected graphs, not only $\closeness$, but {\em every} measure is subgraph-based relative to the induced ranking. We proceed to formalize this discussion.

Let $\vcg = \{(v,G) \in \vg \mid G \text{ is connected}\}$. For an arbitrary centrality measure $\Ce$, its version that operates only on connected graphs is defined as the function $\conC : \vcg \rightarrow \mathbb{R}$ such that, for every $(v,G) \in \vcg$, $\Ce(v,G) = \conC(v,G)$, i.e., it is the restriction of $\Ce$ over $\vcg$.
We then say that $\conC$ is a subgraph-based measure (resp., subgraph-based measure relative to the induced ranking) if there exist a subgraph family $\F$ and a filtering function $f$ such that $\conC = \con\subc{\F,f}$ (resp., $\rank(\Ce) \cap \vcg^2 = \rank(\subc{\F,f}) \cap \vcg^2$).
We can then establish the following result:

\begin{theorem}\label{the:ranking-connected-graphs}
	Consider a centrality measure $\Ce$. It holds that $\conC$ is a subgraph-based measure relative to the induced ranking.
\end{theorem}

\begin{proof}
We are going to define a subgraph family $\F$ and a filtering function $f$ such that $\rank(\Ce) \cap \vcg^2 = \rank(\subc{\F,f}) \cap \vcg^2$, which in turn implies that $\conC$ is a subgraph-based measure relative to the induced ranking, as needed.
Consider an arbitrary connected graph $G$. We first observe that, for every $v \in V(G)$, it holds that $|\A(v, G)|\geq |V(G)|$ since every path from $v$ to any other vertex in $G$ is a connected subgraph containing $v$. 
We then define the equivalence relation $\equiv_G$ over $V(G)$ as follows:  $v \equiv_G u$ if $\Ce(v,G) = \Ce(u,G)$. Let $V(G)/_{\equiv_G} = \{C_1,\ldots,C_m\}$ be the equivalence classes of $\equiv_G$. We can assume, without loss of generality, that, for every $i,j \in [m]$, with $C_i = [v]_{\equiv_G}$ and $C_j = [u]_{\equiv_G}$, $i<j$ implies $\Ce(v,G)< \Ce(u, G)$.
We then define the subgraph family $\F$ in such a way that, for every vertex $v \in V(G)$, $|\F(v,G)|= i-1$ if $[v]_{\equiv_G} = C_i$.\footnote{Note that for pairs $(u,G')$, where $G'$ is a non-connected graph, we can simply define $\F(u,G')$ as the empty set since it is irrelevant what $\F$ does over non-connected graphs.} Note that such a subgraph family $\F$ always exists since, as discussed above, $|\A(v,G)|\geq |V(G)|$, but we have that $|V(G)/_{\equiv_G}| \leq |V(G)|$. Note also that we can ensure that $\F$ is closed under isomorphism by using the same idea as in the proof of Theorem~\ref{the:characterization}.
Finally, we define the filtering function $f$ in such a way that, for every $i \in \{0,\ldots,m-1\}$, $f(i) = i+1$. 
It is now not difficult to verify that indeed $\rank(\Ce) \cap \vcg^2 = \rank(\subc{\F,f}) \cap \vcg^2$, and the claim follows.
\end{proof}

As discussed above, $\con\closeness$ is not a subgraph-based measure (this is implicit in the proof of Proposition~\ref{pro:closeness}), whereas $\con\closeness$ is a subgraph-based measure relative to the induced ranking (follows from Theorem~\ref{the:ranking-connected-graphs}). This reveals a striking difference between the two notions of expressiveness, that is, being subgraph-based or being subgraph-based realtive to the induced ranking, when focussing on connected graphs.

We conclude this section by stressing that Theorem~\ref{the:ranking-connected-graphs} provides a unifying framework for all centrality measures in a practically relevant setting: connected graphs and induced ranking. Indeed, graphs in real-life scenarios, although might be non-connected, they typically consists of one dominant connected component and several small components that are usually neglected as, by default, the most important vertex appears in the dominant component. Moreover, in real-life graph-based applications, we are typically interested in the induced ranking rather than the absolute centrality values assigned to vertices.
\section{Monotonic Filtering Functions}\label{sec:monotonic-functions}

Until now, we considered arbitrary filtering functions without  any restrictions. On the other hand, the filtering functions $f_{\times 2}$ and $\log_2$ used to express $\stress$ and $\allsub$, respectively, as subgraph-based measures are monotonic; formally, a filtering function $f$ is {\em monotonic} if, for all $x,y \in \mathbb{N}$, $x \leq y$ implies $f(x) \leq f(y)$.
It is natural to ask Questions I and II for {\em monotonic subgraph-based centrality measures}, i.e., subgraph-based centrality measures $\subc{\F,f}$ where $f$ is monotonic.
Needless to say, one can study a plethora of different families of subgraph-based centrality measures that use filtering functions with certain properties (e.g., linear functions, logarithmic functions, etc.). However, such a thorough analysis is beyond the scope of this work, and it remains the subject of future research.
%

\medskip

\noindent
\paragraph{Monotonic Subgraph-based Measures.}
We first give a result analogous to Proposition~\ref{pro:no-subc-measure}, showing that not all subgraph-based measures are monotonic, and thus, the bounded value property is not the answer to Question I in the case of monotonic subgraph-based measures.
%

\begin{proposition}\label{pro:no-monotonic-subc-measure}
	There is a subgraph-based centrality measure that is not monotonic.
\end{proposition}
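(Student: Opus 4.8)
The plan is to exhibit a concrete subgraph motif measure $\Ce = \subc{\F,f}$ and then argue that \emph{any} representation of $\Ce$ as a subgraph motif measure forces the filtering function to fail monotonicity. The commented-out construction in the excerpt already suggests the right template: take the graph $\hat{G} = (\{v_1,v_2,v_3\},\{\{v_2,v_3\}\})$ and a measure that assigns value $2$ to a vertex surrounded by \emph{more} connected subgraphs than a vertex assigned value $3$. The key design constraint is that the ``inversion'' in centrality values must be forced to coincide with an inversion in subgraph counts, regardless of which subgraph family and filtering function are chosen.

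\medskip
\noindent\textbf{Approach.} First I would fix $\hat{G}$ and a measure $\Ce$ that is constantly $1$ on all graphs except $\hat{G}$, where it takes three distinct values in a deliberately ``non-monotone'' arrangement relative to the subgraph counts $|\A(v_i,\hat{G})|$. Concretely, in $\hat{G}$ the isolated vertex $v_1$ satisfies $|\A(v_1,\hat{G})| = 1$ (only $G_{v_1}$), while $v_2$ and $v_3$ each lie on the single edge and satisfy $|\A(v_2,\hat{G})| = |\A(v_3,\hat{G})| = 2$. The idea is to assign the \emph{higher} centrality value to the vertex with \emph{fewer} available subgraphs, so that a monotone $f$ cannot reproduce the ordering. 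To establish that $\Ce$ \emph{is} subgraph motif in the first place (so that Proposition~\ref{pro:no-monotonic-subc-measure} is about monotonicity and not expressibility), I would either directly exhibit a witnessing $\langle\F,f\rangle$, or simply verify that $\Ce$ enjoys the bounded value property and invoke Theorem~\ref{the:characterization}. The latter is cleaner: I would check that for each $n>0$, the set $\val{}{n}(\Ce)$ has at most $n+1$ elements, which holds because the only nonconstant behaviour occurs at $\hat{G}$ and the extra values appear only once the subgraph count is large enough to afford them.

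\medskip
\noindent\textbf{The monotonicity obstruction.} Suppose toward a contradiction that $\Ce = \subc{\F,f}$ with $f$ monotonic. The crux is to extract a forced comparison between two subgraph-family sizes. Since $\Ce$ takes three distinct values on the three vertices of $\hat{G}$, and $f$ is a function, the three quantities $|\F(v_1,\hat{G})|$, $|\F(v_2,\hat{G})|$, $|\F(v_3,\hat{G})|$ must be pairwise distinct. But these sizes are constrained by the available connected subgraphs: $|\F(v_1,\hat{G})| \le |\A(v_1,\hat{G})| = 1$, while $|\F(v_2,\hat{G})|, |\F(v_3,\hat{G})| \le 2$. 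Three distinct values drawn from $\{0,1,2\}$ must be exactly $\{0,1,2\}$, and $v_1$ with its cap of $1$ must receive $0$ or $1$. I would then exploit the fact that the vertex to which $\Ce$ assigns the median value $2$ (rather than the top value $3$) is forced to receive the \emph{largest} subgraph count, yielding a pair of vertices $v,v'$ in $\hat G$ with $|\F(v,\hat G)| < |\F(v',\hat G)|$ but $\Ce(v,\hat G) > \Ce(v',\hat G)$, i.e.\ $f(|\F(v,\hat G)|) > f(|\F(v',\hat G)|)$ with $|\F(v,\hat G)| < |\F(v',\hat G)|$, contradicting monotonicity of $f$.

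\medskip
\noindent\textbf{Main obstacle.} The delicate part is \emph{not} the contradiction itself but setting up the value assignment on $\hat{G}$ so that the forced ordering of subgraph-family sizes genuinely conflicts with the ordering of centrality values for \emph{every} choice of $\F$. The tension is that $\F$ has freedom to assign any size up to $|\A(v_i,\hat{G})|$, so a naive assignment might allow $\F$ to ``route around'' the inversion by giving the high-value vertex a small count and the low-value vertex a large count in a way that happens to be monotone. I must therefore choose the centrality values so that the vertex capped at subgraph count $1$ (namely $v_1$) is assigned a value that is neither the smallest nor the largest, forcing a genuine inversion between the two edge-incident vertices whose counts are less constrained. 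Verifying that this inversion survives \emph{all} admissible $\F$ is the step requiring care, and it is essentially a small pigeonhole argument over the at-most-three achievable sizes combined with the monotonicity constraint on $f$.
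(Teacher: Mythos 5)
Your template is the paper's own: the same graph $\hat{G} = (\{v_1,v_2,v_3\},\{\{v_2,v_3\}\})$, a measure that is constantly $1$ off $\hat{G}$ and takes three distinct values on it, the observation that three distinct values force three pairwise distinct sizes $|\F(v_i,\hat{G})| \in \{0,1,2\}$ with $|\F(v_1,\hat{G})| \leq 1$, and an inversion contradicting monotonicity. Your route for the expressibility half is a legitimate small variation: the paper explicitly exhibits two witnessing pairs $\langle\F_1,f_1\rangle$, $\langle\F_2,f_2\rangle$, whereas you invoke the bounded value property and Theorem~\ref{the:characterization}; this does work, since for the paper's measure $\val{}{1}(\Ce) = \{1,3\}$ and $\val{}{n}(\Ce) = \{1,2,3\}$ for $n \geq 2$, so $|\val{}{n}(\Ce)| \leq n+1$ for all $n$. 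It is shorter but non-constructive, and the paper's explicit case analysis buys something you lose: it shows the two exhibited families are the \emph{only} options, which is what makes the non-monotonicity of every representation transparent.

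There is, however, a genuine error in your final paragraph, and it contradicts your own second paragraph. You assert that the correct design is to give $v_1$ (the vertex capped at $|\A(v_1,\hat{G})| = 1$) a value that is ``neither the smallest nor the largest.'' That choice destroys the proof: if $\Ce(v_3,\hat{G}) = 1$, $\Ce(v_1,\hat{G}) = 2$, $\Ce(v_2,\hat{G}) = 3$, and $\Ce \equiv 1$ elsewhere, then the measure \emph{is} monotonic subgraph motif. Take $\F(v,G) = \emptyset$ for all $(v,G)$ with $G \neq \hat{G}$, $\F(v_3,\hat{G}) = \emptyset$, $\F(v_1,\hat{G}) = \{G_{v_1}\}$, $\F(v_2,\hat{G}) = \A(v_2,\hat{G})$ (of size $2$), and the monotone filtering function $f(0)=1$, $f(1)=2$, $f(n)=3$ for $n \geq 2$; then $\Ce = \subc{\F,f}$, since the value order $1 < 2 < 3$ aligns with the size order $0 < 1 < 2$ and all caps are respected. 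The obstruction arises precisely when the capped vertex $v_1$ carries the \emph{maximum} of three distinct values, as in the paper and in your earlier paragraphs: a monotone $f$ with $f(s_1) > f(s_2) > f(s_3)$ forces $s_1 > s_2 > s_3 \geq 0$, hence $s_1 \geq 2$, contradicting $s_1 = |\F(v_1,\hat{G})| \leq 1$. Note that this contradiction is immediate and does not even require your intermediate claim that the median-value vertex is forced to receive the largest count; that claim is what the paper derives (without assuming monotonicity, via the constraint $f(|\F(v,G_v)|) = 1$ from the single-vertex graphs) in order to pin down the only two admissible families. So: keep the value assignment of your second paragraph, discard the design criterion of your last one, and the argument is correct and essentially the paper's.
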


\begin{proof}
	Let $G_1$ be the graph with just one isolated node $(\{v_1\},\emptyset)$, and $G_2$ be the graph $(\{v_1,v_2,v_3\},\{\{v_2,v_3\}\})$.
	Consider the (partial) function $\Ce : \vg\ \rightarrow\ \mathbb{R}$ defined as follows:
	\[
	\Ce(v,G)\ =\ \begin{cases}
	1 & G = G_2 \text{ and } v \in \{v_2,v_3\} \\
	2 & G = G_2 \text{ and } v=v_1\\
	3 & G = G_1 \text{ and } v=v_1.
	\end{cases}
	\]
	It is easy to see that $\Ce$ can be extended to a proper centrality measure $\hat{\Ce}$: for every pair $(u,G') \in \vg$ such that $(v,G) \simeq (u,G')$, where $(v,G)\in \{(v_1, G_1),(v_1,G_2),(v_2,G_2),(v_3,G_2)\}$, let $\hat{\Ce}(u,G') = \Ce(v,G)$, and in any other case let $\hat{\Ce}(u,G') = 1$. 
	We first show that $\hat{\Ce}$ is a subgraph-based measure. Notice that, for every vertex $v \in \ins{V}$, $\hat{\Ce}(v,G_v) = \Ce(v_1,G_1) = 3$. Hence, we have only two options concerning the set of connected subgraphs assigned to the vertices of $G_2$ by a subgraph family, and the filtering function, which are the following: with $G_{uv} $ being the single-edge graph $(\{u,v\},\{\{u,v\}\})$, either
	\[
	\F_1(v,G)\ =\ \begin{cases}
	\emptyset & v=v_1 \text{ and } G = G_1\\
	\{G_{v_1}\} & v=v_1 \text{ and } G = G_2\\
	\{G_{v_2},G_{v_2v_3}\} & v=v_2 \text{ and } G=G_2\\
	\{G_{v_3},G_{v_2v_3}\} & v=v_3 \text{ and } G=G_2
	\end{cases}
	\]
	with $f_1(0)=3$, $f_1(1)=2$ and $f_1(2)=1$, or
	\[
	\F_2(v,G)\ =\ \begin{cases}
	\{G_{v_1}\} & v=v_1 \text{ and } G = G_1\\
	\emptyset & v=v_1 \text{ and } G = G_2\\
	\{G_{v_2},G_{v_2v_3}\} & v=v_2 \text{ and } G=G_2\\
	\{G_{v_3},G_{v_2v_3}\} & v=v_3 \text{ and } G=G_2
	\end{cases}
	\]
	with $f_2(0)=2$, $f_2(1)=3$ and $f_2(2)=1$. 
	We can now extend $\F_1$ and $\F_2$ into subgraph families that are closed under isomorphism as follows: for every $(u,G') \in \vg$ with $(v,G) \simeq (u,G')$, if $(v,G)\in \{(v_1, G_1),(v_1,G_2),(v_2,G_2),(v_3,G_2)\}$, then $\F_1(v,G) \simeq \F_1(u,G')$ and $\F_2(v,G) \simeq \F_2(u,G')$, otherwise, $\F_1(u,G') = \emptyset$ and $\F_2(u,G') = \{G_u\}$. It is clear that $\hat{\Ce} = \subc{\F_1,f_1} = \subc{\F_2,f_2}$. Observe, however, that both $f_1$ and $f_2$ are not monotonic functions.
\end{proof}

The proof of Proposition~\ref{pro:no-monotonic-subc-measure} essentially tells us that the key reason why the subgraph-based measure $\hat{\Ce}$ is not monotonic is because the maximum centrality value is assigned to a vertex surrounded by few connected subgraphs.
To formalize this intuition, we first collect all the different values $x$ assigned by a measure $\Ce$ to the vertices of a graph $G$ that are surrounded by ``too many'' connected subgraphs such that $x$ does not exceed the maximum value assigned by $\Ce$ to the vertices of $G$ surrounded by ``too few'' connected subgraphs. More precisely, for an integer $n >0 $, we define the set of values
\[
\bval{G}{n}(\Ce)\ =\ \left\{x \in \bigcup_{m>0} \val{G}{m}(\Ce) \mid
x \not\in \val{G}{n}(\Ce)~~\text{and}~~x < \max \val{G}{n}(\Ce)\right\}.
\]
We then define the set of values
\[
\bval{}{n}(\Ce)\ =\ \bigcup_{G \in \gr} \bval{G}{n}(\Ce).
\]
We can now define a refined version of the bounded value property, which provides a better upper bound for $|\val{}{n}(\Ce)|$:

\begin{definition}[\textbf{Monotonic Bounded Value Property}]\label{def:mbvp}
	A centrality measure $\Ce$ enjoys the {\em monotonic bounded value property} if, for every $n>0$, $|\val{}{n}(\Ce)| \leq n+1-|\bval{}{n}(\Ce)|$. \hfill\markfull
\end{definition}

It is not difficult to see that the measure $\Ce$ devised in the proof of Proposition~\ref{pro:no-monotonic-subc-measure} does not enjoy the monotonic bounded value property. Indeed, $\val{}{1}(\Ce) = \{1,3\}$ and $\bval{}{1} = \{2\}$, and thus, $|\val{}{1}(\Ce)| =  2 > 1$.
The above refinement of the bounded value property is all we need to get a precise characterization of monotonic subgraph-based measures; hence the name ``monotonic bounded balue property''.

\begin{theorem}\label{the:monotonic-characterization}
	Consider a centrality measure $\Ce$. The following statements are equivalent:
	
	\begin{enumerate}
		\item $\Ce$ is a monotonic subgraph-based centrality measure.
		\item $\Ce$ enjoys the monotonic bounded value property.
	\end{enumerate}
\end{theorem}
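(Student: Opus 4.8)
The plan is to prove the two implications separately, in the spirit of Theorem~\ref{the:characterization} but tracking monotonicity throughout. The organizing device is to list the distinct values taken by $\Ce$ as an increasing sequence $y_0 < y_1 < \cdots$ and, for each index $k$, to record the least number of connected subgraphs needed to realize $y_k$, namely $m_k = \min\{|\A(v,G)| : (v,G) \in \vg,\ \Ce(v,G) = y_k\}$. The target is to show that each of the two statements is equivalent to the single arithmetic condition that the values can be listed as such a sequence and $m_k \ge k$ for every $k$; the theorem then follows.

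For $(1) \Rightarrow (2)$, suppose $\Ce = \subc{\F,f}$ with $f$ monotonic. Since $\F(v,G) \subseteq \A(v,G)$, a vertex with $|\A(v,G)| \le n$ has $|\F(v,G)| \le n$, so its value lies in $\{f(0),\ldots,f(n)\}$. Consequently the distinct values of $\Ce$ form the monotone image of $f$, hence an increasing sequence, and writing $\ell_k$ for the least index with $f(\ell_k) = y_k$ yields $\ell_0 < \ell_1 < \cdots$, whence $\ell_k \ge k$ and therefore $m_k \ge \ell_k \ge k$. To read off the displayed bound I would unwind the definitions: once the values are increasing, $\val{}{n}(\Ce) = \{y_k : m_k \le n\}$, while $\bval{}{n}(\Ce)$ consists exactly of those $y_k$ that are not realizable with at most $n$ subgraphs yet still lie below $\max \val{}{n}(\Ce)$; these two sets are disjoint and their union is the initial segment $\{y_0,\ldots,y_K\}$, where $y_K = \max \val{}{n}(\Ce)$. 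Since $m_K \le n$ and $m_K \ge K$, we get $K \le n$, so this union has at most $n+1$ elements, which is precisely $|\val{}{n}(\Ce)| + |\bval{}{n}(\Ce)| \le n+1$.

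For $(2) \Rightarrow (1)$, assume the monotonic bounded value property. It implies the ordinary bounded value property, so $|\val{}{n}(\Ce)| \le n+1$ for every $n$; moreover no value can sit above infinitely many others, since otherwise taking $n$ to be the least number of subgraphs realizing such a value would force $\bval{}{n}(\Ce)$ to be infinite, contradicting the bound. Hence the values again form an increasing sequence, and the identity from the previous paragraph turns the property into $\max\{k : m_k \le n\} \le n$ for all $n$, which is equivalent to $m_k \ge k$ for all $k$. I then build the witness directly: set $f(k) = y_k$ (extended to a monotone total function on $\mathbb{N}$ by a terminal constant value when there are finitely many values), and choose $\F$ so that $|\F(v,G)|$ equals the index of $\Ce(v,G)$ in the sequence. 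This is well defined, since that index $k$ satisfies $k \le m_k \le |\A(v,G)|$, so a subfamily of $\A(v,G)$ of the required size exists; by construction $\subc{\F,f} = \Ce$ with $f$ monotonic.

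The step I expect to be the main obstacle is the exact bookkeeping that sharpens the bound $n+1$ to $n+1 - |\bval{}{n}(\Ce)|$. The crux is that, under monotonicity, a value that is ``skipped'' at level $n$ --- realizable only with more than $n$ subgraphs, yet still below the largest value realizable with at most $n$ --- must nonetheless occupy one of the $n+1$ available indices $f(0),\ldots,f(n)$, and it is exactly these skipped values that $\bval{}{n}(\Ce)$ is designed to count. Establishing the disjointness of $\val{}{n}(\Ce)$ and $\bval{}{n}(\Ce)$ and the fact that their union is an initial segment of the value sequence is therefore the heart of the argument, and it is also where the order-type side condition --- ruling out a limit value lying above infinitely many smaller ones --- has to be dealt with carefully.
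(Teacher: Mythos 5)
Your proposal follows essentially the same route as the paper's proof: enumerate the values of $\Ce$ in increasing order, let the size of the subgraph family encode the index of a value, and take the filtering function sending index $k$ to the $(k{+}1)$-st value. Your invariant $m_k \ge k$ is a clean repackaging of the paper's two steps, namely the bound $|D_n(\Ce)| \le n+1$ for $D_n(\Ce)=\val{}{n}(\Ce)\cup\bval{}{n}(\Ce)$ together with the lemma that every value of $D_{n+1}(\Ce)\setminus D_n(\Ce)$ lies strictly above every value of $D_n(\Ce)$; your ``no value sits above infinitely many others'' lemma plays the role of that ordering lemma in guaranteeing an enumeration of order type at most $\omega$. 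The genuine differences are cosmetic: you prove $(1)\Rightarrow(2)$ directly via the least indices $\ell_k$, where the paper argues by contradiction and pigeonhole.

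One caveat, and it concerns exactly the step you single out as the crux. Your claim that ``$\bval{}{n}(\Ce)$ consists exactly of those $y_k$ not realizable with at most $n$ subgraphs yet below $\max\val{}{n}(\Ce)$'' is \emph{not} an unwinding of the printed definition: the paper defines $\bval{}{n}(\Ce)$ as $\bigcup_{G\in\gr}\bval{G}{n}(\Ce)$, and membership in $\bval{G}{n}(\Ce)$ is tested against $\val{G}{n}(\Ce)$ and $\max\val{G}{n}(\Ce)$, not against the global quantities. Under that literal reading your identity, and in particular the disjointness of $\val{}{n}(\Ce)$ and $\bval{}{n}(\Ce)$, fails: with $f=\mi{id}$, put a family of size $3$ at the center of $S_4$ (which has $|\A|=8>5$), a family of size $5$ at one leaf (which has $|\A|=5$), size $0$ elsewhere on $S_4$, and realize each value $k\in\{0,\dots,5\}$ at the endpoint of a line graph $L_{\max(k,1)}$; then $3\in\bval{S_4}{5}(\Ce)\cap\val{}{5}(\Ce)$ and $|\val{}{5}(\Ce)|+|\bval{}{5}(\Ce)|\ge 7>6$, so this monotonic subgraph motif measure violates the monotonic bounded value property read via the union-over-graphs definition, i.e.\ $(1)\Rightarrow(2)$ is false for that reading. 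The paper's own proof has the same feature: it asserts $\val{}{n}(\Ce)\cap\bval{}{n}(\Ce)=\emptyset$ and infers $b\in\bval{}{n}(\Ce)$ from $b\notin\val{}{n}(\Ce)$ and $b<\max\val{}{n}(\Ce)$, tacitly working with the global set $\{x \mid x\notin\val{}{n}(\Ce) \text{ and } x<\max\val{}{n}(\Ce)\}$. So your attempt matches the paper's de facto argument and is complete under that (evidently intended) global reading; but the ``bookkeeping'' you deferred cannot be discharged against the printed definition --- it is precisely the point at which the definition must be read globally rather than graph by graph.
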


\medskip

\noindent
\paragraph{Induced Ranking.}
Concerning the expressiveness of monotonic subgraph-based centrality measures relative to the induced ranking, we can show that the non-uniform version of the monotonic bounded value property provides a precise characterization.

\begin{definition}[\textbf{Non-Uniform Monotonic Bounded Value Property}]\label{def:non-uniform-mbvp}
	A centrality measure $\Ce$ enjoys the {\em non-uniform monotonic bounded value property} if, for every integer $n>0$ and graph $G \in \gr$, it holds that $|\val{G}{n}(\Ce)| \leq n+1-|\bval{G}{n}(\Ce)|$. \hfill\markfull
\end{definition}

We can then establish the following characterization that is in striking difference with Theorem~\ref{the:characterization-ranking}, which shows that the non-uniform bounded value property is only a necessary condition (but not a sufficient condition) for a centrality measure being subgraph-based relative to the induced ranking.

\begin{theorem}\label{the:monotonic-characterization-ranking}
	Consider a centrality measure $\Ce$. The following statements are equivalent:
	
	\begin{enumerate}
		\item $\Ce$ is a monotonic subgraph-based centrality measure relative to the induced ranking.
		\item $\Ce$ enjoys the non-uniform monotonic bounded value property.
	\end{enumerate}
\end{theorem}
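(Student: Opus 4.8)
The plan is to distill from the per-graph definition of the non-uniform monotonic bounded value property a transparent combinatorial condition, and then build the required monotonic measure directly from it. Fix a graph $G$ and list the distinct values that $\Ce$ assigns to the vertices of $G$ as $c_1 < c_2 < \cdots < c_t$; for each class $i \in [t]$, set $a_i = \min\{|\A(v,G)| \mid v \in V(G),\ \Ce(v,G) = c_i\}$, the least number of connected subgraphs surrounding a vertex of value $c_i$. Since $G_v \in \A(v,G)$ for every $v$, we have $a_i \geq 1$ throughout. The statement I would establish first is a \emph{profile lemma}: $\Ce$ satisfies the non-uniform monotonic bounded value property restricted to $G$ (i.e.\ $|\val{G}{n}(\Ce)| \leq n+1-|\bval{G}{n}(\Ce)|$ for all $n>0$) if and only if $a_i \geq i-1$ for every $i \in [t]$.

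The bookkeeping behind this lemma is the crux, and the step I expect to be the main obstacle. I would first observe that $\val{G}{n}(\Ce) = \{c_j \mid a_j \leq n\}$ and that $\bigcup_{m>0}\val{G}{m}(\Ce)$ equals $\{c_1,\ldots,c_t\}$ (here $a_j \geq 1$ is used). Writing $j^\star = \max\{j \mid a_j \leq n\}$, so that $\max \val{G}{n}(\Ce) = c_{j^\star}$, a direct unfolding of the definition gives $\bval{G}{n}(\Ce) = \{c_j \mid a_j > n \text{ and } j < j^\star\}$. The index sets $\{j \mid a_j \le n\}$ and $\{j < j^\star \mid a_j > n\}$ are disjoint and together partition $\{1,\ldots,j^\star\}$, whence $|\val{G}{n}(\Ce)| + |\bval{G}{n}(\Ce)| = j^\star$. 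Thus the property restricted to $G$ is equivalent to $\max\{j \mid a_j \leq n\} \leq n+1$ for all $n>0$, which I would then show is equivalent to $a_i \geq i-1$ for all $i$: one direction instantiates $n = a_i$ (legitimate since $a_i \geq 1$), the other uses $a_{j^\star} \geq j^\star - 1$ together with $a_{j^\star}\le n$.

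For the implication $(2) \Rightarrow (1)$, I would convert the condition $a_i \geq i-1$ into a subgraph family paired with the identity filtering function. Define a precoloring by $\pc(v,G) = (\text{class index of } v \text{ in } G) - 1$; since $\pc(v,G) = i-1 \leq a_i \leq |\A(v,G)|$ for $v$ in class $i$, this is a legitimate precoloring, so one may choose any subgraph family $\F$ with $|\F(v,G)| = \pc(v,G)$. Taking $f$ to be the (monotonic) identity $f(x)=x$ yields $\subc{\F,f}(v,G) = i-1$, which inside every graph is a strictly increasing function of the $\Ce$-value class. Because $\rank(\cdot)$ only ever compares vertices of the same graph, this gives $\rank(\Ce) = \rank(\subc{\F,f})$, so $\Ce$ is a monotonic subgraph motif measure relative to the induced ranking.

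For the converse $(1) \Rightarrow (2)$, I would start from $\F$ and a monotonic $f$ with $\rank(\Ce) = \rank(\subc{\F,f})$. Since the two measures induce the same within-graph order, they share the same value classes and hence the same profile $a_1,\ldots,a_t$. Monotonicity of $f$ forces $|\F(\cdot,G)|$ to be strictly increasing across classes: if $\Ce(u,G) < \Ce(v,G)$ then $f(|\F(u,G)|) < f(|\F(v,G)|)$, and $f$ non-decreasing gives $|\F(u,G)| < |\F(v,G)|$. Consequently the minima $b_i = \min\{|\F(v,G)| \mid v \in \text{class } i\}$ satisfy $b_1 < b_2 < \cdots$, so $b_i \geq i-1$; combined with $b_i \leq a_i$ (as $|\F(v,G)| \leq |\A(v,G)|$), we obtain $a_i \geq i-1$ for all $i$ and all $G$. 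By the profile lemma, $\Ce$ enjoys the non-uniform monotonic bounded value property. The whole argument therefore hinges on the profile lemma; both implications become short once it is in place.
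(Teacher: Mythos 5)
Your proof is correct, and it takes a partly different route from the paper's. In the direction $(2)\Rightarrow(1)$ you essentially reproduce the paper's construction: inside each graph, order the value classes $C_1,\ldots,C_t$ of $\Ce$, pick $\F$ with $|\F(v,G)|=i-1$ for $v$ in class $i$, and apply a monotonic filtering function (the paper uses $f(i)=i+1$, you the identity, which is immaterial); the paper justifies the existence of such an $\F$ by a counting contradiction showing that if some class-$i$ vertex had $|\A(v,G)|=n<i-1$ then $|\val{G}{n}(\Ce)|+|\bval{G}{n}(\Ce)|>n+1$, which is exactly your inequality $a_i\geq i-1$ in disguise. Where you genuinely diverge is in $(1)\Rightarrow(2)$ and in the supporting infrastructure. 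The paper argues by contradiction: it transfers a violation from $\Ce$ to $\subc{\F,f}$ via $|\val{G}{n}(\subc{\F,f})|=|\val{G}{n}(\Ce)|$ and $|\bval{G}{n}(\subc{\F,f})|=|\bval{G}{n}(\Ce)|$, and then invokes the already-proved uniform characterization (Theorem~\ref{the:monotonic-characterization}) applied to $\subc{\F,f}$, using $\val{G}{n}\subseteq\val{}{n}$ and $\bval{G}{n}\subseteq\bval{}{n}$. You instead prove a self-contained profile lemma --- the identity $|\val{G}{n}(\Ce)|+|\bval{G}{n}(\Ce)|=\max\{j\mid a_j\leq n\}$, so that the non-uniform monotonic bounded value property at $G$ is equivalent to $a_i\geq i-1$ for all $i$ --- and then observe that monotonicity of $f$ forces the per-class minima $b_i$ of $|\F(\cdot,G)|$ to be strictly increasing, whence $a_i\geq b_i\geq i-1$; your partition argument for the index sets $\{j\mid a_j\leq n\}$ and $\{j<j^\star\mid a_j>n\}$ is sound. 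Your route buys independence from Theorem~\ref{the:monotonic-characterization}, a direct (non-contradiction) proof of $(1)\Rightarrow(2)$, and an explicit closed form for the otherwise opaque quantity $|\val{G}{n}(\Ce)|+|\bval{G}{n}(\Ce)|$; the paper's route buys brevity by reusing its uniform characterization. Two small points you should make explicit: the degenerate case $\val{G}{n}(\Ce)=\emptyset$ (where $j^\star$ is undefined but both the property and your reformulation hold vacuously, and your instantiation $n=a_i$ always lands in a non-degenerate case since $a_i\geq 1$), and the observation that $\rank(\Ce)=\rank(\subc{\F,f})$ makes the within-graph value classes of the two measures coincide, which is what licenses transferring the profile $a_1,\ldots,a_t$ between them.
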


\medskip

\noindent
\paragraph{Connected Graphs.} Recall that the family of subgraph-based measures relative to the induced ranking provides a unifying framework for all centrality measures whenever we concentrate on connected graphs (see Theorem~\ref{the:ranking-connected-graphs}). Interestingly, a careful inspection of the proof of Theorem~\ref{the:ranking-connected-graphs} reveals that this holds even for the family of monotonic subgraph-based measures relative to the induced ranking.

\begin{theorem}\label{the:ranking-connected-graphs-monotonic}
	Consider a centrality measure $\Ce$. It holds that $\conC$ is a monotonic subgraph-based measure relative to the induced ranking.
\end{theorem}
\section{Classification}\label{sec:classification}

We proceed to
%
determine whether existing measures belong to the family of (monotonic) subgraph-based measures (relative to the induced ranking) by exploiting the technical tools provided by the results of the previous sections.
Such a classification, apart from being interesting in its own right, will provide insights on the structural similarities and differences among existing centrality measures.
To this end, we focus on established measures from the literature 
and provide a rather complete classification depicted in Tables~\ref{tab:classification} and~\ref{tab:classification-monotonic}; due to space constraints, the formal definitions of the considered measures are omitted.
%
The second (resp., third) column determines whether the measure $\Ce$ stated in the first column is subgraph-based (resp., subgraph-based relative to the induced ranking); $\checkmark$ means that it is,  $\times$ means that it is not, $\times[\mathit{trees}]$ means that it is not even for trees, $\checkmark[\mathit{con}]$ means that it is over connected graphs, $\checkmark[\mathit{trees}]$ means that it is over trees, and $?$ means that it is open.
Concerning Table~\ref{tab:classification-monotonic}, $\star$ refers to any measure considered in Table~\ref{tab:classification} apart from $\betweeness$, and $\times[\mathit{con}]$ means that the respective measure (i.e., $\betweeness$) is not monotonic subgraph-based even for connected graphs.
Note that Table~\ref{tab:classification-monotonic} is identical to Table~\ref{tab:classification}, apart from $\betweeness$, which is provably not monotonic subgraph-based (relative to the induced ranking).

We would like to remark that the result $\checkmark[\mathit{con}]$ for $\eigenvector$ in both tables holds for a broader class of graphs than connected graphs. 
Moreover, we can show that $\betweeness$ is a (monotonic) subgraph-based measure (relative to the induced ranking) for a class of graphs that captures the class of trees and is incomparable to the class of connected graphs.
For the sake of readability, we state our expressibility results only for trees and connected graphs.

\begin{table}
	\centering
	\begin{tabular}{c||c|c}
		\textbf{Measure} & \textbf{Absolute Values} & \textbf{Induced Ranking} \\ \hline
		$\stress$ & $\checkmark$ & $\checkmark$ \\ \hline
		$\allsub$ & $\checkmark$ & $\checkmark$ \\ \hline
		$\degree$ & $\checkmark$ & $\checkmark$ \\ \hline
		$\clique$ & $\checkmark$ & $\checkmark$ \\ \hline
		$\closeness$ & $\times[\mathit{trees}]$ & $\times$ and $\checkmark[\mathit{con}]$ \\ 
		\hline
		$\harmonic$ & $\times[\mathit{trees}]$ & $\times$ and $\checkmark[\mathit{con}]$ \\ \hline
		$\pagerank$ & $\times[\mathit{trees}]$ & $\times$ and $\checkmark[\mathit{con}]$ \\ \hline
		$\eigenvector$ & $\times[\mathit{trees}]$ & $?$ and $\checkmark[\mathit{con}]$ \\ \hline
		$\betweeness$ & $?$ and $\checkmark[\mathit{trees}]$  & $?$ and $\checkmark[\mathit{con}]$ \\ \hline
	\end{tabular}
	\caption{Subgraph-based Measures
	}
	\label{tab:classification}
	\vspace{-2mm}
\end{table}

\begin{table}
	\centering
	\begin{tabular}{c||c|c}
		\textbf{Measure} & \textbf{Absolute Values} & \textbf{Induced Ranking} \\ \hline
		$\star$ & as in Table~\ref{tab:classification} & as in Table~\ref{tab:classification}  \\ \hline
		$\betweeness$ & $\times[\mathit{con}]$ \text{and} $\checkmark[\mathit{trees}]$  & $\times$ \text{and} $\checkmark[\mathit{con}]$ \\ \hline
	\end{tabular}
	\caption{Monotonic Subgraph-based Measures
	} 
	\label{tab:classification-monotonic}
	\vspace{-5mm}
\end{table}

\medskip

\noindent\paragraph{Take-home Messages.}
We highlight the key take-home messages of the above classification, which we believe provide further insights concerning the centrality measures in question:

\begin{enumerate}
	\item If we focus on the induced ranking rather than the absolute values over connected graphs, then the family of monotonic subgraph-based measures should be understood as a unifying framework that incorporates every other measure.
	
	\item Our classification excludes a priori the adoption of certain centrality measures (e.g., $\closeness$, $\harmonic$, etc.) in applications where the importance of a vertex should be measured based on the connected subgraphs surrounding it.
	
	\item $\betweeness$, which computes the percentage of the shortest paths in a graph going through a vertex, is of different nature compared to all the other measures. Notably, although it looks similar to $\stress$, it behaves in a significantly different way. The relationship of $\betweeness$ with (monotonic) subgraph-based measures deserves further investigation.
	
	\item There is a notable difference between the two feedback measures considered in our classification, namely $\pagerank$ and $\eigenvector$, that deserves further exploration. As mentioned above, $\eigenvector$ is a (monotonic) subgraph-based measure relative to the induced ranking over a broader class $\mathcal{C}$ of graphs than connected graphs, whereas $\pagerank$ is provable {\em not} a subgraph-based measure over the class $\mathcal{C}$.
\end{enumerate}

\medskip
\noindent\paragraph{A Note on Directed Graphs.} As discussed in the clarification remark at the end of the Introduction, although our analysis (including the classification of this section) focused on undirected graphs, all the notions and results can be transferred to directed graphs under the notion of weak connectedness. The only exception is the negative result $\times[\mathit{trees}]$ for $\eigenvector$ in Tables~\ref{tab:classification} and~\ref{tab:classification-monotonic}. Although we can show that for directed graphs, $\eigenvector$ is not a (monotonic) subgraph-based centrality measure, it remains open whether this holds even for directed trees (i.e., directed graphs whose underlying undirected graph is a tree).

\section{Conclusions}\label{sec:conclusions}

We have provided a rather complete picture concerning the absolute expressiveness of the family of (monotonic) subgraph-based centrality measures (relative to the induced ranking) by establishing precise characterizations. We have also presented a detailed classification of standard centrality measures by using the tools provided by the aforementioned characterizations.
Although our development focused on undirected graphs, all the notions and results can be transferred to directed graphs under the standard notion of weak connectedness.
%

We would like to stress that the machinery on graph colorings, introduced in Section~\ref{sec:characterization-ranking}, can be used to provide characterizations for all the families considered in the paper, and not only for the family of subgraph-based measures relative to the induced ranking. For example, we can show that a measure $\Ce$ is subgraph-based iff there exists a precoloring of $\vg$ that is uniformly $\Ce$-injective; the latter is defined as non-uniform $\Ce$-injectivity with the difference that $\Ce$-injectivity is enforced across all the graphs (not only inside a certain graph). 

The obvious question that remains open is whether we can isolate a bounded-value-like property that characterizes subgraph-based measures relative to the induced ranking. We believe that our coloring-based characterization (Theorem~\ref{the:monotonic-characterization-ranking}) is a useful tool towards such a bounded-value-like characterization.
Finally, towards a deeper understanding of subgraph-based measures, one should perform a more refined analysis by focussing on restricted classes of subgraph families and filtering functions that enjoy desirable structural properties.



\bibliography{references}



\end{document}